\newif\ifarxiv
\DeclareMathOperator*{\argmin}{arg\,min}
\newcommand{\R}{\mathbb{R}}
\newcommand{\x}{\theta}
\newcommand{\z}{\mathbf{z}}
\newcommand{\ahat}{\hat{\mathbf{a}}}
\newcommand{\y}{\theta'}
\newcommand{\dfdx}{\nabla f(\x)}
\newcommand{\dfdy}{\nabla f(\y)}
\newcommand{\dfdxt}{\nabla f(\xt)}
\newcommand{\dxdt}{\dot{\x}}
\newcommand{\xt}{\x(t)}
\newcommand{\xact}{\x^{(a)}}
\newcommand{\xacti}{\x_i^{(a)}}
\newcommand{\Ab}{\mathbf{A}}
\renewcommand{\cite}[1]{\citep{#1}}
\newtheorem{theorem}{Theorem}
\newtheorem{lemma}{Lemma}
\newtheorem{definition}{Definition}
\title{Towards Hyperparameter-Agnostic DNN Training via Dynamical System Insights}
\author{%
  Carmel Fiscko\thanks{These authors contributed equally.} \\
  Carnegie Mellon University\\
  Pittsburgh, PA 15213 \\
  \texttt{cfiscko@andrew.cmu.edu} \\
  \And
  Aayushya Agarwal$^*$ \\
  Carnegie Mellon University\\
  Pittsburgh, PA 15213 \\
  \texttt{aayushya@andrew.cmu.edu} \\
  \And
  Yihan Ruan \\
  Carnegie Mellon University\\
  Pittsburgh, PA 15213 \\
  \texttt{yihanr@andrew.cmu.edu} \\
  \And
  Soummya Kar \\
  Carnegie Mellon University\\
  Pittsburgh, PA 15213 \\
  \texttt{soummyak@andrew.cmu.edu} \\
  \And
  Larry Pileggi \\
  Carnegie Mellon University\\
  Pittsburgh, PA 15213 \\
  \texttt{pileggi@andrew.cmu.edu} \\
  \And
  Bruno Sinopoli \\
  Washington University in St. Louis\\
  St. Louis, MO 63130 \\
  \texttt{bsinopoli@wustl.edu} \\
}
\author{%
  David S.~Hippocampus\thanks{Use footnote for providing further information
    about author (webpage, alternative address)---\emph{not} for acknowledging
    funding agencies.} \\
  Department of Computer Science\\
  Cranberry-Lemon University\\
  Pittsburgh, PA 15213 \\
  \texttt{hippo@cs.cranberry-lemon.edu} \\
}
\begin{document}

\maketitle

\begin{abstract}
We present a stochastic first-order optimization method specialized for deep neural networks (DNNs), ECCO-DNN. This method models the optimization variable trajectory as a dynamical system and develops a discretization algorithm that adaptively selects step sizes based on the trajectory's shape. This provides two key insights: designing the dynamical system for fast continuous-time convergence and developing a time-stepping algorithm to adaptively select step sizes based on principles of numerical integration and neural network structure. The result is an optimizer with performance that is insensitive to hyperparameter variations and that achieves comparable performance to state-of-the-art optimizers including ADAM, SGD, RMSProp, and AdaGrad. We demonstrate this in training DNN models and datasets, including CIFAR-10 and CIFAR-100 using ECCO-DNN and find that ECCO-DNN's single hyperparameter can be changed by three orders of magnitude without affecting the trained models' accuracies. ECCO-DNN’s insensitivity reduces the data and computation needed for hyperparameter tuning, making it advantageous for rapid prototyping and for applications with new datasets. To validate the efficacy of our proposed optimizer, we train an LSTM architecture on a household power consumption dataset with ECCO-DNN and achieve an optimal mean-square-error without tuning hyperparameters.
\end{abstract}

\section{Introduction}

Stochastic gradient-based optimization is a central idea across many fields of research, science, and engineering. In particular, developing optimization methods for training deep neural networks (DNNs) is of interest, as DNNs have yielded great success across multiple domains including image classification \cite{9421942}, medicine \cite{bakator2018deep} and power systems \cite{9265470}. While good results from DNNs are attainable, the overall performance often relies on careful tuning of optimizer hyperparameters such as learning rate, weight decay, and momentum weight. Finding good hyperparameters in practice is often expensive both in terms of computation and data, as cross-validation routines require sufficient evaluation of the DNN across the hyperparameter values. While automated hyperparameter selection routines such as grid search, random search, or Bayesian optimization \cite{bergstra2011algorithms} can reduce manual effort, small variations in parameter values can result in large ranges in performance on the same DNN; for example, \cite{zhang2017intent} demonstrates a classification accuracy range of 33\% to 96\% when the hyperparameters are varied within an order of magnitude. In addition, optimal hyperparameter values do not always transfer between DNNs of different structures or to new datasets \cite{liao2022empirical}. This presents a bottleneck for rapid prototyping, training new models for unseen datasets, and applications with low computational resources.

In this work, we aim to reduce the computational effort for hyperparameter tuning by introducing ECCO-DNN: a closed-loop optimizer that achieves comparable performance to modern DNN optimizers without the need for extensive hyperparamter tuning. This is achieved by a two step process: first, the ordinary differential equations (ODEs) describing the optimization variables are related to a dynamical system, and designed for fast convergence to a steady state, which coincides with a critical point of the objective function. Second, the ODE solution is discretized using an explicit numerical integration method and step sizes are adaptively selected based on controlling properties of the numerical integration including local truncation error (LTE), and limiting in regions of fast gradient-change.  

The discretized ODE solution is controlled in a closed-loop manner to the known optimality condition to reduce the sensitivity to hyperparamter values. By controlling for numerical integration properties, we ensure the discretized waveform will follow the trajectory to the critical point value. 
The term "closed-loop" is used because the step sizes are adjusted based on the LTE they generated in the previous step as well as the rate of change of activation function gradients, thus yielding a feedback loop. In comparison to step size schedulers that diminish the learning rate based on the number of epochs, the numerical integration based control allows large step sizes to be taken when the gradient is flatter, and smaller step sizes when the gradient changes quickly. In addition, ECCO-DNN is conceptually different from other adaptive methods like AdaGrad and Adam, as we adapt the step size such that the discrete-time trajectory tracks the continuous-time trajectory accurately, but without sampling too many times. 



The result is an adaptive optimizer that has a single hyperparemter: a maximum LTE tolerance. We demonstrate in simulation that this parameter can be changed by several orders of magnitude with nominal effect on optimization performance. This method is validated by training multiple DNN models on the MNIST and CIFAR-10 datasets.  We find that ECCO-DNN achieves comparable classification performance to Adam, SGD, RMSProp, and AdaGrad. However, ECCO-DNN is \emph{highly insensitive} to its hyperparameter, whereas small perturbations to the parameters of the comparison methods yielded meaningless results. This demonstrates ECCO-DNN's advantage in achieving comparable performance without the need for any hyperparameter tuning. ECCO-DNN's insensitivity to hyperparameter variations is particularly beneficial for rapid prototyping, training new/unseen models or datasets, and applications with limited computational resources. We demonstrate these benefits in training an LSTM model on a dataset related to household power usage, for which there are no prior insights for comparison method hyperparameters. In comparison to state-of-the-art optimizers, ECCO-DNN yielded equivalent test performance while requiring significantly less training data and computation. ECCO-DNN is integrated within the PyTorch framework to provide generalized training for models across different datasets.

\section{Problem Formulation}
In this work, we consider the following unconstrained optimization problem:
    \begin{gather}
        \min_{\x} f(\x),\label{prob}\\
        \x^* \in \argmin_{\x} f(\x).
    \end{gather}
    where $\x\in \R^n$ is a parameter vector and $f: \R^n\to\R$ is an objective function dependent on the parameters such as a loss function. It is known \cite{brown1989some} that \eqref{prob} may be solved via the component-wise scaled gradient flow ODE initial value problem (IVP),
    \begin{equation}
        \dxdt(t)=-Z(\xt)^{-1} \nabla f(\xt),\quad \x(0) = \x_0. \label{scaled gradient flow}
    \end{equation}
    where $\dxdt(t)$ refers to the time derivative of $\xt$ and $Z^{-1}$ is a positive diagonal matrix. The solution to an ODE IVP is computed with the integral $\x(t) = \x(0) + \int_0^t \dxdt(s) ds.$ In this work we assume that the following set of assumptions hold:

    \begin{enumerate}[wide=\parindent,label=\textbf{(A\arabic*)}]
        \item $f\in C^2$ and $\inf_{\x\in\R^n}f(\x)>-R$ for some $R>0$. \label{a1}
        \item $f$ is coercive, i.e., $\lim_{\|\x\|\to\infty} f(\x) = +\infty$. \label{a2}
        \item (Bounded Hessian) For all $\theta\in\R^n$,  $\|\nabla^2 f(\theta)\|\preceq \lambda_{sup}$ for some $\lambda_{sup}>0$. \label{a3}
        \item (Lipschitz and bounded gradients) For all $\x,\y\in\R^n$,  $\Vert \dfdx-\dfdy\Vert\leq L\Vert\x-\y\Vert$, and $\Vert\dfdx\Vert\leq B$ for some $B>0$. \label{a4}
        \item $Z(\x)^{-1}$ is diagonal for all $\x$ and $d_2>Z_{ii}(\x)^{-1}>d_1$ for all $i,\ \x$ and for some $d_1,\ d_2>0$.\label{z def}
    \end{enumerate}

    
    Note that $f$ is not assumed to be convex. Coercivity guarantees that there exists a finite global minimum for $f(\x)$ that is differentiable \cite{peressini1988mathematics}. 
    
    \begin{definition}
    We say $\x$ is a \emph{critical point} of $f$ if it satisfies $\nabla_{\x}f(\x) = \vec{0}$. Let $S$ be the set of \emph{critical points}, i.e. $S=\{\x\ |\ \dfdx = \vec{0}\}$.
    \end{definition}
    
    The coercivity and differentiability of $f$ guarantee that any minima are within the set $S$.

\section{Method}
Our optimization method is called Equivalent Circuit Controlled Optimization for Deep Neural Networks (ECCO-DNN) as some of the continuous-time analysis and the numerical integration method draw upon circuit simulation ideas\footnote{Details on the circuit connection are provided in the Appendix.}. Section \ref{sec:gf} defines the gradient flow ODE, proposes a control policy, and establishes its convergence. The ODE is then solved for its steady-state, i.e. a critical point of the objective function in Section \ref{sec:disc} via a discretization based on Forward Euler integration. Our method controls the local truncation error and monitors quickly changing gradients produced by activation functions of the neural network. 

\subsection{Gradient Flow} \label{sec:gf}
We first establish convergence of the component-wise scaled gradient flow in \eqref{scaled gradient flow}. It can be shown that for any objective function, $f$, and any scaling function, $Z$, satisfying \ref{a1} - \ref{z def}, the gradient flow IVP will converge to some $\x \in S$. 

\begin{theorem} \label{convergence theorem}
Given \ref{a1}, \ref{a4}, \ref{z def}, consider the component-wise scaled gradient flow IVP in \eqref{scaled gradient flow}. Then,
\begin{equation}
    \lim_{t\to\infty}\Vert \dfdxt\Vert = 0.
\end{equation}
\end{theorem}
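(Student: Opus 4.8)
The plan is to run a Lyapunov argument with the objective value $f(\xt)$ as the energy, extract square-integrability of the gradient along the flow, and then invoke Barbalat's lemma to turn that into the claimed pointwise limit. First I would check that the IVP \eqref{scaled gradient flow} has a solution on all of $[0,\infty)$: by \ref{a4} and \ref{z def} the vector field $-Z(\xt)^{-1}\dfdxt$ has norm at most $d_2 B$, so no finite-time blow-up can occur, and local Lipschitzness of the field gives existence and uniqueness. Setting $V(t)=f(\xt)$ and differentiating along \eqref{scaled gradient flow}, $\dot V(t) = \dfdxt^{\top}\dxdt = -\dfdxt^{\top} Z(\xt)^{-1}\dfdxt \le -d_1\Vert \dfdxt\Vert^2 \le 0$, using that \ref{z def} makes $Z(\xt)^{-1}$ diagonal with entries bounded below by $d_1>0$. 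Hence $V$ is non-increasing, and by \ref{a1} it is bounded below by $-R$, so $V(t)$ converges to a finite limit as $t\to\infty$.

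Integrating the displayed inequality gives $d_1\int_0^T \Vert\dfdxt\Vert^2\,dt \le V(0)-V(T) \le V(0)+R$ for every $T$, hence $\int_0^\infty \Vert\dfdxt\Vert^2\,dt < \infty$. To promote this to $\Vert\dfdxt\Vert\to 0$, I would show $g(t):=\Vert\dfdxt\Vert^2$ is uniformly continuous on $[0,\infty)$. Since $f\in C^2$ and \ref{a4} forces $\Vert\nabla^2 f(\xt)\Vert\le L$, while $\Vert\dxdt\Vert\le d_2 B$ by \ref{a4} and \ref{z def}, the chain rule gives $|\dot g(t)| = |2\,\dfdxt^{\top}\nabla^2 f(\xt)\,\dxdt| \le 2 L d_2 B^2$, so $g$ is globally Lipschitz and in particular uniformly continuous. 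Barbalat's lemma (a uniformly continuous function whose integral over $[0,\infty)$ converges must tend to $0$) then yields $g(t)\to 0$, i.e.\ $\lim_{t\to\infty}\Vert\dfdxt\Vert = 0$.

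The main obstacle is precisely this last implication: monotone convergence of $f(\xt)$ together with $\int_0^\infty\Vert\dfdxt\Vert^2\,dt<\infty$ does not by itself force $\Vert\dfdxt\Vert$ to vanish, since the gradient could remain large on progressively shorter intervals. Ruling this out requires the uniform-continuity estimate, and that is exactly where the Lipschitz/boundedness content of \ref{a4} and the two-sided bounds on $Z^{-1}$ in \ref{z def} enter — assumption \ref{a1} alone (giving only monotonicity and boundedness below) is not enough.
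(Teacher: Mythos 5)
Your proof is correct and follows essentially the same route as the paper: energy decrease of $f(\xt)$ via \ref{z def}, square-integrability of $\Vert\dfdxt\Vert^2$ from \ref{a1}, uniform continuity of $t\mapsto\Vert\dfdxt\Vert^2$, and a Barbalat-type conclusion. The only differences are cosmetic --- you add an explicit global-existence check and obtain uniform continuity by bounding $\frac{d}{dt}\Vert\dfdxt\Vert^2$ with the Hessian bound implied by \ref{a4}, whereas the paper composes the Lipschitzness of $\Vert\nabla f(\cdot)\Vert^2$ in $\x$ with the uniform continuity of $\xt$ in $t$.
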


The proof is provided in Appendix \ref{main proof}. Given Theorem \ref{convergence theorem}, the next objective is to design a scaling matrix $Z$ satisfying \ref{z def} that yields faster convergence than the base case of $Z=I$. To tackle this objective, $Z$ is chosen to maximize the negative time derivative of the squared gradient:
\begin{align}
    &\max_{Z}- \frac{1}{2} \frac{d}{dt}\Vert \dfdxt \Vert^2 \\
    &= \max_Z \dfdxt^{\top}\nabla^2 f(\xt)Z^{-1}\dfdxt. \label{z criteria}
\end{align}
The purpose of this criteria is to design $Z$ to maximize the speed, i.e. rate of change, towards the optimality condition of $\nabla f(\x) = 0$, thus reaching a critical point fast. Define $G(\xt)$ be a diagonal matrix where the diagonal elements are the gradients: $G_{ii}(\xt) = \frac{\partial f(\xt)}{\partial \x_i(t)}$. To ensure a closed-form solution for $Z$, a regularization term is added to the objective:
\begin{equation}
 \max_{\z} \dfdxt^{\top}\nabla^2 f(\xt)G(\xt)\z-\frac{\delta}{2} \Vert \z\Vert^2,
\end{equation}
where $\delta$ is a scalar penalty (set to $\delta=1$, see Appendix \ref{main proof}) and $\z$ is a vector where $\z_i=Z^{-1}_{ii}$.
This regularized objective can then be solved for a functional form for $Z$:
\begin{equation}
    Z_{ii}(\xt)^{-1} = \max\{[G(\xt) \nabla^2 f(\xt)\dfdxt]_i,1\},\label{z true}
\end{equation}
where the maximization retains positivity and invertibility of $Z$. This derivation is in Appendix \ref{gf derivation}. 
\begin{lemma} \label{bounded z true}
    Assume \ref{a1}-\ref{a4} hold. The definition of $Z(\xt)^{-1}$ in \eqref{z true} satisfies \ref{z def}.\footnote{See proof in Appendix \ref{zbounded}.}
\end{lemma}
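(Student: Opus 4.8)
The plan is to verify the two one-sided bounds in \ref{z def} directly from the closed form \eqref{z true}, using only \ref{a1} (so that $\nabla^2 f$ exists), \ref{a3}, and \ref{a4}; the remaining assumptions are not needed here. The lower bound is essentially free: because $Z_{ii}(\x)^{-1}$ is defined in \eqref{z true} as a maximum with $1$, we have $Z_{ii}(\x)^{-1}\ge 1$ for all $i$ and all $\x\in\R^n$, so any constant $d_1\in(0,1)$ --- say $d_1=\tfrac12$ --- gives the required strict inequality $Z_{ii}(\x)^{-1}>d_1$.

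For the upper bound I would expand the $i$-th entry of $G(\x)\nabla^2 f(\x)\dfdx$. Since $G(\x)$ is diagonal with $G_{ii}(\x)=[\dfdx]_i$, this entry equals $[\dfdx]_i\,[\nabla^2 f(\x)\dfdx]_i$. Then $\bigl|[\dfdx]_i\bigr|\le\Vert\dfdx\Vert\le B$ by \ref{a4}, while $\bigl|[\nabla^2 f(\x)\dfdx]_i\bigr|\le\Vert\nabla^2 f(\x)\dfdx\Vert\le\Vert\nabla^2 f(\x)\Vert\,\Vert\dfdx\Vert\le\lambda_{sup}B$ by submultiplicativity of the operator norm together with \ref{a3} (the Hessian norm bound also follows from \ref{a4}, since $L$-Lipschitz continuity of $\nabla f$ forces $\Vert\nabla^2 f\Vert\le L$). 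Multiplying the two factor bounds gives $[G(\x)\nabla^2 f(\x)\dfdx]_i\le\lambda_{sup}B^2$ uniformly in $i$ and $\x$, hence $Z_{ii}(\x)^{-1}=\max\{[G\nabla^2 f\,\nabla f]_i,1\}\le\max\{\lambda_{sup}B^2,1\}=:M$; taking $d_2=M+1$ yields $Z_{ii}(\x)^{-1}<d_2$. Since $Z(\x)^{-1}$ is diagonal by construction and $d_1,d_2$ depend on neither $i$ nor $\x$, all three requirements of \ref{z def} are satisfied.

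There is no genuine obstacle in this argument; it is a short chain of norm inequalities. The only points that need a little care are (i) controlling one coordinate of the matrix--vector product $\nabla^2 f(\x)\dfdx$ by its full Euclidean norm and then invoking submultiplicativity of the operator norm, and (ii) choosing $d_1$ strictly below $1$ and $d_2$ strictly above $M$ so that the inequalities in \ref{z def} hold strictly rather than with equality.
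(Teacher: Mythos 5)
Your proof is correct and follows essentially the same route as the paper's: the lower bound is immediate from the truncation at $1$, and the upper bound uses \ref{a3} and \ref{a4} to arrive at the same controlling quantity $\lambda_{sup}B^2$. The only difference is mechanical: the paper bounds the sum $\sum_{i} Z_{ii}(\xt)^{-1}$ via the quadratic form $\dfdxt^{\top}\nabla^2 f(\xt)\dfdxt \leq \lambda_{\max}\big(\nabla^2 f(\xt)\big)\Vert \dfdxt\Vert^2$, whereas you bound each diagonal entry directly using $|[\dfdxt]_i|\leq B$ and the operator norm of the Hessian — a componentwise variant that is, if anything, slightly cleaner since taking absolute values sidesteps any concern about negative entries of $G(\xt)\nabla^2 f(\xt)\dfdxt$.
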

While \eqref{z true} is an optimal solution for $Z$ based on \eqref{z criteria}, it is a second order method, and is therefore not suited to neural network training. To avoid computation of the Hessian, we instead derive a first order $Z$ based on the following approximation of the time derivative of the gradient at $\Delta t$:
\begin{align}
    \ahat(\xt)\triangleq \frac{\dfdxt - \nabla f(\x (t-\Delta t))}{\Delta t}\approx \frac{d}{dt}\dfdxt. \label{ahat}
\end{align}
The limit as $\Delta t\to 0$ is exactly equal to desired quantity $\frac{d}{dt}\nabla f(\xt)$, making this an apt approximation for small $\Delta t$. With this substitution, the following approximation to \eqref{z true} can be derived, as shown in Appendix \ref{z approx deriv}. 
\begin{equation}
     \widehat{Z}_{ii}(\xt)^{-1}=\sqrt{\max\{[-G(\xt)\ahat(\xt)]_i,1\}}. \label{z approx}
\end{equation}

\begin{lemma}\label{bounded z approx}
    Assume \ref{a1}-\ref{a4} hold. The definition of $Z(\xt)^{-1}$ in \eqref{z approx} satisfies \ref{z def}.\footnote{See proof in Appendix \ref{zhat bounded}.}
\end{lemma}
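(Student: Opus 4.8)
The plan is to verify the two requirements of \ref{z def} in turn, mirroring the argument already used for Lemma \ref{bounded z true}. Diagonality of $\widehat{Z}(\xt)^{-1}$ is immediate, since \eqref{z approx} defines it entrywise. It then remains to produce constants $d_1,d_2>0$ with $d_2>\widehat{Z}_{ii}(\xt)^{-1}>d_1$ for every index $i$ and every $\xt\in\R^n$.

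For the lower bound, I would simply note that the inner $\max\{\cdot,1\}$ is at least $1$, so $\widehat{Z}_{ii}(\xt)^{-1}=\sqrt{\max\{[-G(\xt)\ahat(\xt)]_i,1\}}\geq 1$, and hence any $d_1\in(0,1)$, e.g. $d_1=1/2$, works; this is where the outer $\max$ in \eqref{z approx} earns its keep, just as in \eqref{z true}.

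For the upper bound, I would bound the argument of the square root uniformly. Since $G(\xt)$ is diagonal, $[-G(\xt)\ahat(\xt)]_i=-G_{ii}(\xt)\,\ahat_i(\xt)$, and $|G_{ii}(\xt)|=|\partial f(\xt)/\partial\x_i|\leq\Vert\dfdxt\Vert\leq B$ by \ref{a4}. For the factor $\ahat_i(\xt)$, rather than expanding $\x(t)-\x(t-\Delta t)=\int_{t-\Delta t}^{t}\dxdt(s)\,ds$ (which would reintroduce $Z^{-1}$, the very object being bounded), I would use the crude uniform gradient bound from \ref{a4}: from the definition \eqref{ahat}, $\Vert\ahat(\xt)\Vert=\Vert\dfdxt-\nabla f(\x(t-\Delta t))\Vert/\Delta t\leq 2B/\Delta t$, which holds for all $t$ (including $t\le\Delta t$ under whatever convention is used to define $\x(t-\Delta t)$ near the start, since $\Vert\nabla f\Vert\le B$ everywhere). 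Combining the two estimates, $|[-G(\xt)\ahat(\xt)]_i|\leq 2B^2/\Delta t$, so $\widehat{Z}_{ii}(\xt)^{-1}\leq\sqrt{\max\{2B^2/\Delta t,\,1\}}$, and taking $d_2$ strictly larger, e.g. $d_2=\sqrt{\max\{2B^2/\Delta t,1\}}+1$, finishes the verification of \ref{z def}.

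The one place a careless proof could go wrong is exactly the bound on $\ahat$: the natural move is to use the Lipschitz constant $L$ together with the trajectory length over $[t-\Delta t,t]$, but that route puts $\Vert Z^{-1}\Vert$ back on the right-hand side and is circular. Using the $2B$ estimate is wasteful but self-contained, which is all \ref{z def} needs. I would also remark that $d_2$ depends on the fixed step $\Delta t>0$; this is harmless since $\Delta t$ is a fixed parameter, and if a $\Delta t$-independent constant were wanted one could instead argue inductively along the discretized iterates, but that refinement is unnecessary here.
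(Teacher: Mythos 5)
Your proof is correct, and its upper-bound step takes a genuinely different route from the paper's. The paper also gets the lower bound trivially from the truncation at $1$, but for the upper bound it sums the diagonal entries, applies Cauchy--Schwarz to $\dfdxt^{\top}\big(\nabla f(\x(t-\Delta t))-\dfdxt\big)$, invokes the Lipschitz constant $L$ from \ref{a4}, and then bounds the displacement $\Vert\x(t-\Delta t)-\x(t)\Vert$ by a global quantity ($\max_{\x^*\in S}\Vert\x(0)-\x^*\Vert$), justified by finiteness of $\x(0)$ and coercivity \ref{a2}; this sidesteps the circularity you flag, since the displacement is controlled geometrically rather than by integrating $\dxdt = -Z^{-1}\nabla f$. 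You instead bound each entry directly via $|G_{ii}(\xt)|\le B$ and $\Vert\ahat(\xt)\Vert\le 2B/\Delta t$, using only the uniform gradient bound in \ref{a4}. Your argument is cruder ($2B^2/\Delta t$ versus a bound scaling with $BL$ times a single-step displacement, which is typically much smaller) but is more self-contained: it needs neither \ref{a2} nor any claim about boundedness of the continuous trajectory, which is the most delicate step in the paper's version. Both bounds share the $\Delta t$ dependence, so nothing is lost there, and your remark that the inner $\max\{\cdot,1\}$ supplies the lower bound matches the paper exactly.
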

\begin{lemma}\label{z approx complexity}
    Given the gradients, the computation complexity to evaluate $\widehat{Z}^{-1}$ at $\xt$ is $\mathcal{O}(n)$.\footnote{See proof in Appendix \ref{complexity approx}.}
\end{lemma}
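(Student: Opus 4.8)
The plan is to evaluate $\widehat{Z}^{-1}$ term by term as given in \eqref{z approx} and count scalar operations, exploiting two structural facts: that $G(\xt)$ is diagonal, and that under the hypothesis ``given the gradients'' the vectors $\dfdxt$ and $\nabla f(\x(t-\Delta t))$ are both already available (the latter stored from the previous step of the optimizer). Under the standard real-arithmetic cost model, each scalar primitive — subtraction, multiplication, division, comparison, square root — is $\mathcal{O}(1)$, so it suffices to show that only $\mathcal{O}(n)$ such primitives are invoked.

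First I would bound the cost of forming $\ahat(\xt)$ via \eqref{ahat}: this is one subtraction of two $n$-vectors followed by a division by the scalar $\Delta t$, i.e.\ $\mathcal{O}(n)$ operations. Next, since $G(\xt)$ is the diagonal matrix with $G_{ii}(\xt) = [\dfdxt]_i$, the product $G(\xt)\ahat(\xt)$ is just the component-wise (Hadamard) product of the gradient vector with $\ahat(\xt)$ — no dense matrix-vector multiply is required — costing $n$ multiplications. Negation, the component-wise maximum with $1$, and the component-wise square root each touch the $n$ entries once, contributing $\mathcal{O}(n)$ apiece. Summing these bounds gives total cost $\mathcal{O}(n)$ for assembling all $n$ diagonal entries $\widehat{Z}_{ii}(\xt)^{-1}$.

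The only point requiring care, rather than genuine difficulty, is making explicit that $G$ must be treated as a diagonal operator — a naive dense reading would inflate the count to $\mathcal{O}(n^2)$ — and that \eqref{z approx}, unlike the exact scaling \eqref{z true}, involves no Hessian or other second-derivative information, which is exactly why the first-order approximation $\widehat{Z}^{-1}$ was introduced. I would close by collecting the per-step bounds to conclude the stated complexity of $\mathcal{O}(n)$.
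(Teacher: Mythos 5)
Your proposal is correct and follows essentially the same route as the paper: the paper's proof simply writes out the per-component formula for $\hat{Z}_{ii}^{-1}$ and observes that each of the $n$ entries costs $\mathcal{O}(1)$ given the current and previous gradients, which is exactly your component-wise operation count exploiting the diagonal structure of $G(\xt)$. Your explicit staging (forming $\ahat$, Hadamard product, max, square root) is just a more detailed bookkeeping of the same argument.
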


\subsection{Discretization}\label{sec:disc}
Given a gradient flow known to converge to a critical point, the next step is to solve the ODE for this value. To accomplish this goal, we discretize the continuous-time trajectory and evaluate the state of the ODE, $\xt$ at each time-point as,
\begin{equation}
\x(t+\Delta t) = \x(t) + \int_t^{t+\Delta t} \dot{\x}(s)ds.\label{eq:ode_int}
\end{equation}
This integral, however, generally does not yield an analytical solution and is instead approximated via numerical integration methods. In this work, we apply an explicit Forward-Euler (FE) method, which approximates $\x(t+\Delta t)$ by $\x(t+\Delta t) = \x(t) + \Delta t \dot{\x}(t).$ For our scaled gradient flow definition in \eqref{scaled gradient flow}, the FE discretization is,
\begin{equation}
\x(t+\Delta t) = \x(t) - \Delta t Z(\x(t))^{-1} \nabla f(\x(t)). \label{eq:gd_flow_fe}
\end{equation}
Beginning at $t=0$, let $k$ be the number of discrete time steps and $\alpha_k \triangleq (\Delta t)_k$. Then, the trajectory expressed in discrete time is $\x_{k+1} = \x_k - \alpha_k Z(\x_k)^{-1} \nabla f(\x_k)$.
While FE is easy to evaluate (note that defining $Z\equiv \mathbf{I}$ and applying FE yields general gradient descent with step size $\Delta t$), it is prone to numerical issues of low accuracy and skipping regions of interest \cite{pillage1995electronic}. In this work, we draw on ideas from dynamical systems analysis to combat these problems. In the following sections the step size will be developed as,
\begin{equation}
    (\Delta t)_k = \delta_k (\Delta t_{\tau})_k,
\end{equation}
where $\Delta t_{\tau}$ is selected to preserve accuracy based on measuring the approximation error, and $\delta\in(0,1]$ scales down $\Delta t_{\tau}$ to ensure regions with quickly changing gradients are not missed.

\subsubsection{Adaptive Step Size for Accuracy}
The FE approximation is known to have low accuracy.
For the gradient flow, the error in the approximation is a problem that causes the discretization to diverge from the true trajectory and the critical point cannot be found.
To ensure that the continuous-time trajectory is closely followed, we apply techniques in circuit simulation \cite{pillage1995electronic} that estimate the local truncation error (LTE) of potential step sizes, and only adopt a $\Delta t$ if the resulting error is bounded by a given tolerance $\eta$ \cite{pillage1995electronic}. The LTE produced a step $\Delta t$ under FE discretization can be approximated by \cite{pillage1995electronic} as, 
\begin{align*}
	\tau = 0.5 \Delta t | &Z(\x(t-\Delta t))^{-1} \nabla f(\x(t-\Delta t)) - \\  &Z(\x(t))^{-1}\nabla f(\x(t))|,
\end{align*} 
where $\tau_i$ is the error for dimension $i$. From the optimization perspective, the LTE of FE is a measure of the change in gradient over a single iteration. Neglecting the LTE can result in divergence from the trajectory and lead to either gradient overflow or oscillation around a critical point. For instance, consider optimizing the Rosenbrock function \cite{testfunc} using a  gradient flow and a standard FE ODE solver with a fixed time step of $2e-3$, which violates a LTE check. As shown in Figure \ref{fig:matlab_response_rosenbrock}, this trajectory fails to converge to a critical point. However, by selecting a $\Delta t$ that satisfies a LTE condition, we can ensure that the discrete trajectory follows the continuous-time trajectory to the critical point.

\ifarxiv
\begin{figure}[h]
    \centering
\includegraphics[width=.4\linewidth]{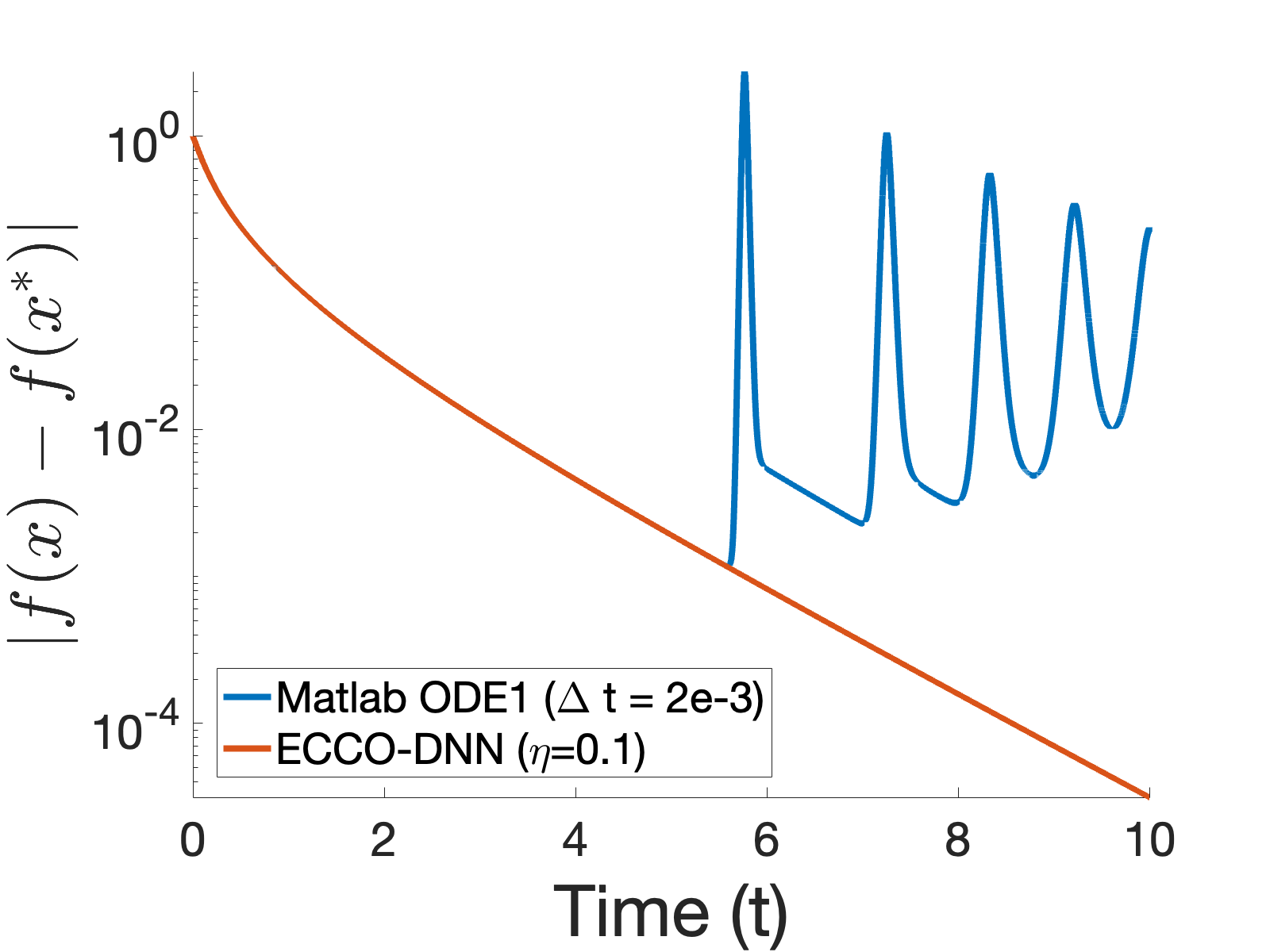}
  \captionof{figure}{Simulating the transient response of Rosenbrock \cite{andrei2008unconstrained} using Matlab ODE1 (F.E. numerical integration) with a fixed time step of $\Delta t=2e-3$, which violates the LTE condition of $\eta=0.1$ and ECCO-DNN which adaptively conforms to the LTE condition}
\label{fig:matlab_response_rosenbrock}
\end{figure}
\begin{figure}
    \centering
\includegraphics[width=.4\linewidth]{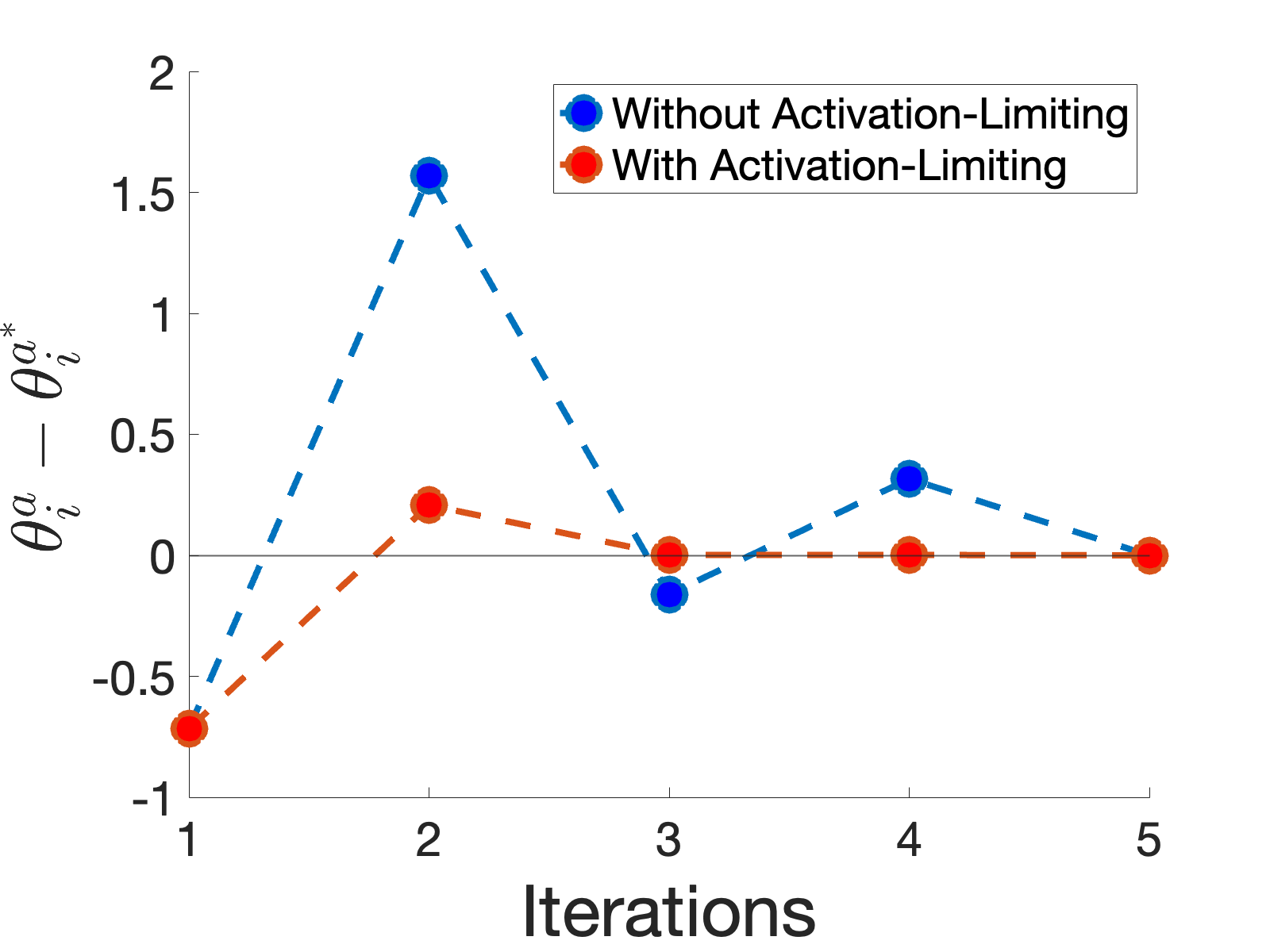}
  \captionof{figure}{ During the training of MNIST data, the input state $\xacti$ to a ReLU neuron in the DNN model \cite{mnist_model} exhibits oscillations around the optimal point $\theta_i^{a^*}$ when the activation function limiting is not applied and takes longer to converge.}
  \label{fig:limiting}
\end{figure}
\else
\begin{figure}[h]
    \centering
\includegraphics[width=.6\linewidth]{Figures/matlab_rosenbrock_lte.png}
  \captionof{figure}{Simulating the transient response of Rosenbrock \cite{andrei2008unconstrained} using Matlab ODE1 (F.E. numerical integration) with a fixed time step of $\Delta t=2e-3$, which violates the LTE condition of $\eta=0.1$ and ECCO-DNN which adaptively conforms to the LTE condition}
\label{fig:matlab_response_rosenbrock}
\end{figure}
\begin{figure}
    \centering
\includegraphics[width=.56\linewidth]{Figures/limiting_mnist_error_plt.png}
  \captionof{figure}{ During the training of MNIST data, the input state $\xacti$ to a ReLU neuron in the DNN model \cite{mnist_model} exhibits oscillations around the optimal point $\theta_i^{a^*}$ when the activation function limiting is not applied and takes longer to converge.}
  \label{fig:limiting}
\end{figure}
\fi


To reach the critical point quickly, we want to find large step sizes that produce LTE satisfying the given tolerance $\eta$. To accomplish this, we first calculate the error $\tau$ using the time-step from the previous iteration, $(\Delta t)_{k-1}$. We then scale $(\Delta t)_{k-1}$ by $\frac{\eta}{\max(\tau)}$ to propose a step size based on LTE:
\begin{align}
    \Delta t_{\tau}&=\min\left\{\eta\|\tau\|^{-1}_{\infty}(\Delta t)_{k-1},1\right\},\\
    &= \min\left\{2\eta \|(\Delta \x)_{k-1} - (\Delta \x)_k\|_{\infty}^{-1},1\right\},
\end{align}
where $(\Delta \x)_k = Z(\x_k)^{-1}\nabla f(\x_k)$. The factor, $\frac{\eta}{\max(\tau)}$, maximally scales the time-step to ensure the maximum local truncation error is close to the tolerance, $\eta$. This routine diminishes the step size when the LTE condition is violated, and scales up the step size when the maximum LTE is below the tolerance. Bounding $\Delta t_{\tau} \leq 1$ also ensures we do not encounter gradient overflow and follows the bounds for learning rate in other optimizers \cite{kingma2014adam}.



\subsubsection{Adaptive Step Size for Quickly Changing Gradients}
The second issue is that FE discretization may fail to track the continuous-time trajectory in regions where the gradient changes quickly, e.g. with large local Lipschitz constants, resulting in divergence or numerical oscillation. Given Theorem \ref{convergence theorem}, it is desirable to select the largest $\Delta t$ possible at each iteration to reach the critical point quickly. If a proposed $\Delta t$ yields gradients such that $\nabla f(\xt)\approx \nabla f(\x(t+\Delta t))$, then the LTE will appear to be small. However, if there exists some $t'\in (t, t+\Delta t)$ such that $\nabla f(\x(t'))$ is very different to $\nabla f(\xt)$ and $\nabla f(\x + \Delta t)$, then the proposed $\Delta t$ is too large, and the discretized trajectory will diverge from the continuous trajectory. 

Issues related to quickly changing gradients are generally avoided with schedulers that diminish the learning rate to achieve sufficiently small step sizes. 
However, such schedulers can be inefficient because open-loop implementations constantly diminish the step size rather than adapting the step sizes based on the topology of the gradient.

%



Luckily, neural networks have an explicit structure, allowing an adaptive step-size selection routine to be developed that does not skip over regions with quickly changing gradients. In DNNs, this issue arises in the activation layers, which exhibit the highest rates of change for the gradient. Consider the gradients of the sigmoid, ReLu, and tanh activation functions illustrated in Figure \ref{fig:activation_gradients}, where large FE steps initialized at the green (left) dot may jump to the red (right) dot. Although they may satisfy LTE checks, such time steps clearly ignore the central regions of the activation functions and may cause premature saturation.


\begin{figure*}
    \centering
    \includegraphics[width=0.7\textwidth]{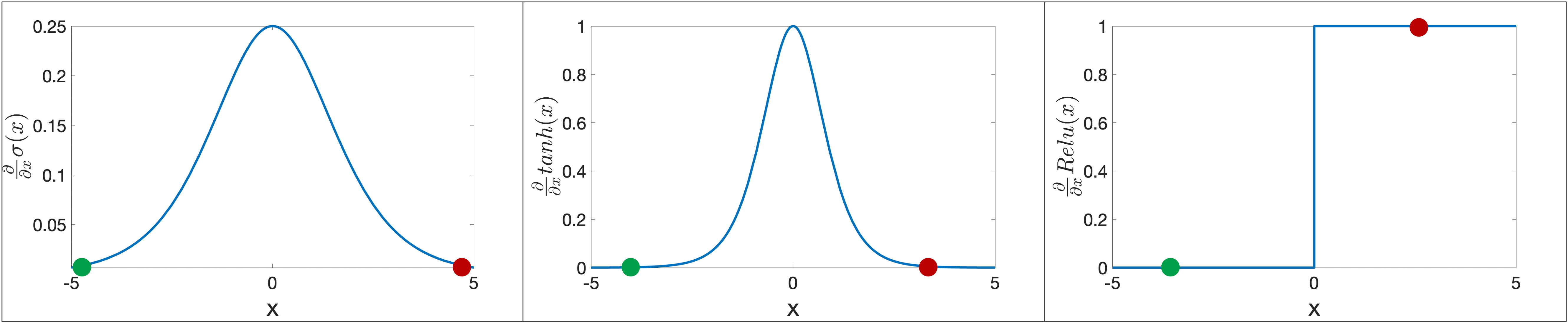}
    \caption{Gradients of three activation functions (sigmoid, ReLu and tanh) exhibit fast changes at $\xact=0$. Large step-sizes can cause a step from the green dot to the red dot in a single iteration, thereby skipping the entire region of interest around $\xact=0$.}
    \label{fig:activation_gradients}
\end{figure*}

To tackle the issue of quickly changing gradients, we take inspiration from circuit simulation. This field has achieved large-scale simulation by developing adaptive step sizes to properly traverse the gradient space of nonlinear diode and transistor models \cite{nagel1975spice2}. In a similar fashion, ECCO-DNN selects a time step to ensure the activation layer does not skip the region where the maximum rate of change of the gradient occurs. We denote the parameters that are being optimized for a DNN as $\x=\begin{bmatrix}
    W \\ B
\end{bmatrix}$, which consists of the weights, $W$, and biases, $B$, for each layer. The vector of inputs to the activation layer is denoted as $\xact=W\hat{\theta} + B$, where $\hat{\theta}$ is the output of the previous layer. We observe the maximum rate of change of the gradient occurs at $\xact=0$, as shown in Figure \ref{fig:activation_gradients}. Therefore, ECCO-DNN modifies the time steps to limit the input of the activation layer from skipping over the point $\xact=0$ for the activation functions.  Note in Figure \ref{fig:limiting}, the trajectory without limiting oscillates around the optimal point and takes five iterations to converge, whereas the limited step takes three iterations.

This limiting step will only be applied to the parameters of layers with commonly used activations (sigmoid, ReLu, and tanh) because they have large gradients around $\xact=0$ and are most prone to skipping the middle region. Each element of the vector, $\xact$, is denoted as $\xacti$. If for a proposed $\Delta t_{\tau}$ it holds that $\xacti(t)<0$ and $\xacti(t+\Delta t_{\tau})>0$ (or vice versa), then we want to \emph{limit} the step by selecting $\delta\in(0,1)$ such that $\xacti(t+\delta \Delta t_{\tau}) = 0$. This condition checks if the proposed time step results in a new $\xacti$ that skips over the middle region of the activation function. By forcing $\xacti(t+\delta \Delta t_{\tau}) = 0$, we ensure that the middle region is visited. If this condition holds for multiple $i$, the $\delta$ is chosen such that $\xacti(t)$ and $\xacti(t+\delta \Delta t_\tau)$ do not have opposite signs for all $i$. 
To enforce that $\xacti(t+\Delta t)=0$, first define $\Delta \xact=\Delta W \hat{\theta} + \Delta B$, where $\Delta \x=\begin{bmatrix}
    \Delta W \\ \Delta B
\end{bmatrix} = -\Delta t_{\tau}Z_{ii}(\x(t))^{-1}\frac{\partial}{\partial \x}f(\x).$
Then, the $\delta_i$ that forces $\xacti(t+\delta_i \Delta t_{\tau})=0$, is $\delta_i = -\xacti(t)/\Delta \xacti$. As the $\delta$ for the entire layer must be selected such that $\xacti(t)$ and $\xacti(t+\delta \Delta t_{\tau})$ do not have opposite signs  for all $i$, it follows that $\delta = \min\{1, \delta_i\}$. This step size limiting procedure is used to update the parameters, $\x$ (i.e., the weights and biases) of layers with sigmoid, ReLu, or tanh activation functions. Other layers are assigned $\delta_i\equiv 1$.

\section{Algorithm}
In this section, we summarize how to use scaled gradient flow and F.E. integration to solve optimization problems in Algorithm \ref{bigger algo}. The proposed algorithm is a closed-loop optimizer that adaptively selects $\alpha_k$ with a single hyperparameter, the LTE tolerance $\eta$. A sample of the stochastic gradient is observed on line \ref{alg:sample}. This is used to calculate the scaling matrix on line \ref{alg:control}. As the limiting condition for activation functions requires an explicit definition of the structure from the user, the limiting conditions in line \ref{alg:delta} is optional. If the user provides the structure of the DNN model and the parameters of interest are of the layer preceding the activation function, which may violate the limiting condition, then line \ref{alg:delta} calculates the $\delta_i$ values, which are otherwise set to 1. The overall step size is updated on line \ref{alg:step}, and the parameters are updated on line \ref{alg:update}. We find that $\eta=0.1$ and normalizing the diagonal elements of $\hat{Z}^{-1}$ works well in practice.



\begin{algorithm}[h]\small
 \SetAlgoLined
 \LinesNumbered
 \SetKwInOut{Input}{Input}
 \Input{Stochastic objective function with parameters $\x$: $f(\x)$,
 Gradient of objective function: $\nabla_{\x}f(\x)$,\\
 Initialization: $\x_0$,
 LTE Tolerance: $\eta> 0$,
 Convergence Condition: $\epsilon> 0$}
 \SetKwProg{Function}{function}{}{end}
 
 $k\gets 0$\;
 
 \SetKwRepeat{Do}{do}{while}
    
     \Do{$\|f(\x_k)-f(\x_{k-1})\|> \epsilon$}
     { 
     $k\gets k+1$\;

     $g_k\gets \nabla f(\x_k)$\;\label{alg:sample}

     $G_k\gets \text{diag}(g_k)$\;
     
     $\hat{Z}^{-1}_k \gets \text{diag}\left(\sqrt{\max\{\mathbf{1}, \alpha_{k-1}G_k\big(g_{k-1} - g_k\big)\}}\right)$\;\label{alg:control}




     $(\Delta \x)_k \gets \hat{Z}^{-1}_k g_k$\;

     $\Delta t_{\tau}\gets \min\left\{2\eta \|(\Delta \x)_{k-1} - (\Delta \x)_k\|_{\infty}^{-1},1\right\}$\;
     

     
     \eIf{Layer with sigmoid, ReLu, or tanh activation function}
     {$\delta_i \gets -\xacti(t)/\Delta \xacti, \ \forall i$\label{alg:delta}}
     {$\delta_i \gets 1$}
     
     $\alpha_k\gets  \min\left\{1,  \delta_i\right\} \Delta t_{\tau}$\;\label{alg:step}
    
    $\x_{k+1}=\x_k-\alpha_k (\Delta \x)_k$\;\label{alg:update}
     }
     \Return{$\x_k$}
 \caption{Equivalent Circuit Controlled Optimization for Deep Neural Networks (ECCO-DNN)}\label{bigger algo}
\end{algorithm}


\section{Related Work}
Numerous adaptive stochastic optimization methods have been proposed to provide an adaptive optimizer for training DNNs including Adam \cite{kingma2014adam}, AdaGrad \cite{duchi2011adaptive}, AdaDelta \cite{zeiler2012adadelta} and RMSProp \cite{rmsproptieleman2012lecture}. Further improvements to these methods include AdamW \cite{adamw}, AdamP \cite{heo2020adamp} and AdaBelief \cite{zhuang2020adabelief}. Additionally, adaptive learning rate methods for second-order optimization methods including AdaHessian \cite{yao2021adahessian}, BFGS \cite{bfgs} and SR1 \cite{wright2006numerical}.

As AdaGrad defines a family of well-known optimization methods that adapt the learning rate based on previous gradient estimates, we next directly compare the ECCO-DNN update to that of AdaGrad. Adagrad defines the rule $\x_{k+1,i} = \x_{k,i} - \Big(\beta/\sqrt{\epsilon+\tilde{G}_{k,ii}}\Big)g_{k,i}$ where $g_k$ is the gradient, $\tilde{G}_k = \sum_{k'=1}^k g_{k'}g_{k'}^{\top}$ is the sum of outer products, $\beta$ is the learning rate, and $\epsilon$ is a small positive quantity. In comparison, the overall ECCO-DNN rule is $\x_{k+1,i} = \x_{k,i} -  \min\left\{1, \max_i \delta_i\right\}\min\left\{2\eta \|(\Delta \x)_{k-1} - (\Delta \x)_k\|_{\infty}^{-1},1\right\}\allowbreak \sqrt{\max\{1, \alpha_{k-1}g_{k,i}\big(g_{k-1,i} - g_{k,i}\big)\}} g_{k,i}$, where the first min term arises due to the activation function limiting, the second due to LTE control, and the square root term due to the trajectory control. While these are both adaptive methods, AdaGrad adapts the learning rate based on frequency of the features in the denominator term, and ECCO-DNN adapts both the trajectory (based on the continuous-time ODE control) and the time step (based on numerical integration accuracy and stability).

Gradient flow methods have been well-studied due to their potential to draw general conclusions in the continuous-time domain \cite{behrman1998efficient}, \cite{attouch1996dynamical}, \cite{brown1989some}. Gradient flow can be viewed as a dynamical system, thereby introducing concepts from control theory \cite{helmke2012optimization}, \cite{yuille1994statistical}, including Lyapunov theory which has emerged as a tool to show convergence of gradient flow \cite{cortes2006finite}, \cite{wilson2018lyapunov}, \cite{wilson2021lyapunov}, \cite{polyak2017lyapunov}, \cite{hustig2019robust}.

While useful theoretical results can be established in continuous-time, solving the ODE with a computer generally necessitates some discretization scheme. Recent advances have used sophisticated explicit integration techniques to approximate the continuous system \cite{pmlr-v97-muehlebach19a}, \cite{lin2016distributed}, \cite{andrei2004gradient}, \cite{scieur2017integration},\cite{maleki2021heunnet}, as well as multi-step Runge-Kutta integration methods \cite{ayadi2021stochastic},\cite{stillfjord2023srkcd},\cite{eftekhari2021explicit}. Further discretization methods have been explored including implicit integration methods such as backward-Euler \cite{barrett2020implicit},\cite{scieur2017integration}. Few of these works \cite{barrett2020implicit},\cite{scieur2017integration}, \cite{andrei2004gradient} have used notions of integration error to analyze existing first-order methods or to propose bounds on step sizes. 

Two methods, in particular, have developed adaptive step size routines using the dynamical system model. The first method applies a numerical integration method known as Heun’s method to adaptively select step sizes via a predictor-corrector rather than considering local truncation error \cite{wadiaoptimization,maleki2021heunnet}. However, Heun’s method is an explicit method that, without control of the LTE, is prone to divergence from the continuous time trajectory. 
In addition, the method proposed in \cite{li2017stochastic} learns an optimal policy for an annealing scheduler using a feedback policy rather than defining the approximation error in their Euler-integration step. However, this relies on decreasing the time-step during training, leading to open-loop control of the optimization process.

ECCO-DNN sets itself apart from adaptive dynamical system methods by employing FE integration and dynamically scaling step sizes based on the LTE and the activation functions used in the network structure. Unlike the other two methods, our approach is guided by the accuracy errors of FE, a practice commonly used in transient simulation tools like circuit simulation \cite{ltspice} and adaptive time-stepping algorithms in the field of numerical methods \cite{fehlberg1970classical,dormand1980family,kimura2009dormand,gragg1964generalized,chandio2010mproving}. Our approach differs from existing adaptive time-stepping algorithms \cite{fehlberg1970classical,dormand1980family,kimura2009dormand,gragg1964generalized,chandio2010mproving} by also limiting the step size based on the neural network structure. Many of the insights are derived from an equivalent circuit model of the dynamical system (as shown in Section \ref{sec:equivalent-circuit}), which has been proposed in fields like circuit simulation \cite{nagel1975spice2} and power systems \cite{jereminov2019equivalent}. We differ from the equivalent circuit model in \cite{jereminov2019equivalent} by designing an equivalent circuit of a dynamical gradient-flow for general optimization, rather than a circuit representation of a power grid.

\section{Simulations}
\label{sec:results}
In this section, we use ECCO-DNN to train multiple DNN models on MNIST, CIFAR-10 and a newly published dataset characterizing an individual household power consumption. We demonstrate that ECCO-DNN achieves comparable classification accuracy to the state-of-the-art optimizers Adam, SGD, RMSprop and AdaGrad for classifying MNIST and CIFAR-10, henceforth referred to as the comparison methods. Adam and SGD are implemented with a cosine annealing scheduler \cite{scheduler_cosine} whose hyperparameters are optimally tuned via grid search. The full list of hyperparameters and operational ranges are provided in Tables \ref{tab:adam_hyperparameters}-\ref{tab:ec_hyperparameters} in the Appendix. Furthermore, we show that ECCO-DNN's  single  hyperparameter ($\eta$) can be changed by several orders of magnitude without affecting the classification accuracy, whereas the similar perturbations to the hyperparameters of the comparison methods result in poor performance. Finally, ECCO-DNN is used to train a LSTM model on a household power consumption dataset. This experiment demonstrates that ECCO-DNN requires significantly less effort to tune for scenarios in which there is no pre-existing knowledge about good operating regions for hyperparameters of comparison methods.



\subsection{Training 4-Layer DNN to Classify MNIST Data}
\label{sec:mnist_experiment}
In this experiment, we train a 4-layer neural network provided by \cite{mnist_model} to classify MNIST data using a minibatch size of 1000. A random search was conducted to find good hyperparameters for each optimizer and scheduler, and the results are listed in Tables \ref{tab:adam_optimal_hyperparameters}-\ref{tab:ec_optimal_hyperparameters} in the Appendix. 
The training loss over 20 epochs using the optimally tuned hyperparameters is plotted in Figure \ref{fig:mnist_training}. Notably, in this experiment, $\eta$ for ECCO-DNN was not optimized and utilized a fixed LTE tolerance value of 0.1. We observe that ECCO-DNN achieves similar performance to the tuned optimizers  and achieved a comparable training classification accuracy of ~99\%.

To test the insensitivity to hyperparameters, we perturbed the hyperparameters of each optimizer within a normalized ball of radius of $\varepsilon = 0.1$ around the optimal hyperparameter values. The value of $\epsilon$ dictates the range of the hyperparameter selection as a percentage of the total operating range of a hyperparameter and is useful to characterize the perturbation for ECCO-DNN’s tolerance hyperparameter which has a much larger operating range.
 Two hundred hyperparameter values are sampled from the uniform distribution $\Tilde{\theta}\sim U(max\{\theta^* - \varepsilon(\overline{\theta} - \underline{\theta}),\underline{\theta}\}, min\{\theta^* + \varepsilon(\overline{\theta} - \underline{\theta}),\overline{\theta}\})$ which is bounded within the operational range of the hyperparameter values, $[\underline{\theta},\overline{\theta}]$. Each selection of hyperparameters was trained for 20 epochs and the resulting classification accuracy is shown in Figure \ref{fig:mnist}. ECCO-DNN reliably trained the model for any perturbation to its hyperparameter and achieves near-optimal training accuracies. However, the comparison methods were highly sensitive to hyperparameter selections as the classification accuracies ranged from 9.8\% to 99.2\%. These results demonstrate that the comparison methods require the performed grid search to have a fine granularity, thus requiring more computation and data for cross-validation. In comparison, ECCO-DNN is insensitive to its hyperparameter value and can be applied directly.

\ifarxiv
\begin{figure}
  \centering
  \includegraphics[width=.4\linewidth]{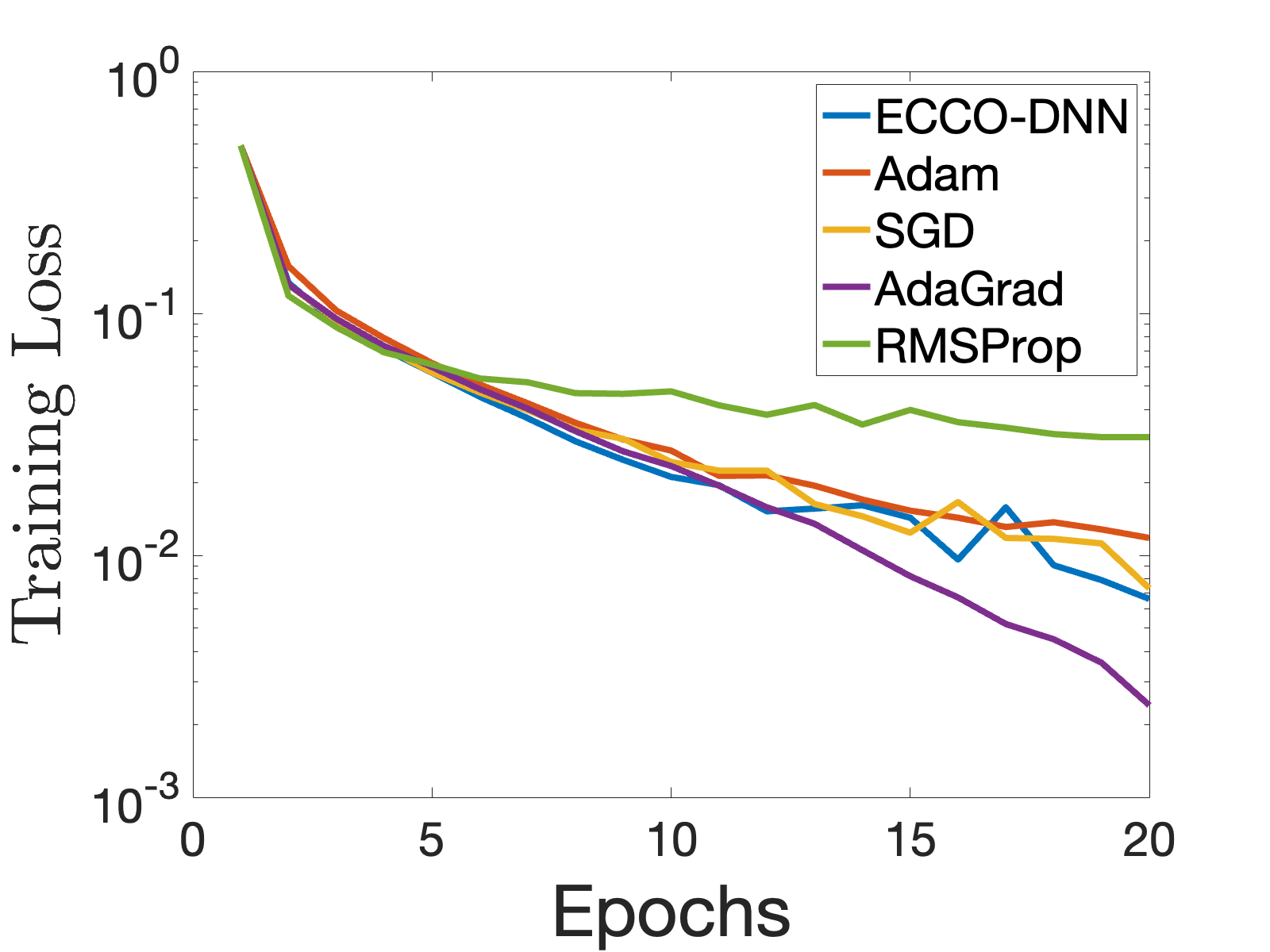}
  \captionof{figure}{Training Loss for DNN Classifying MNIST Data using Optimizers over 20 Epochs}
  \label{fig:mnist_training}
\end{figure}
\begin{figure}
  \centering
  \includegraphics[width=.5\linewidth]{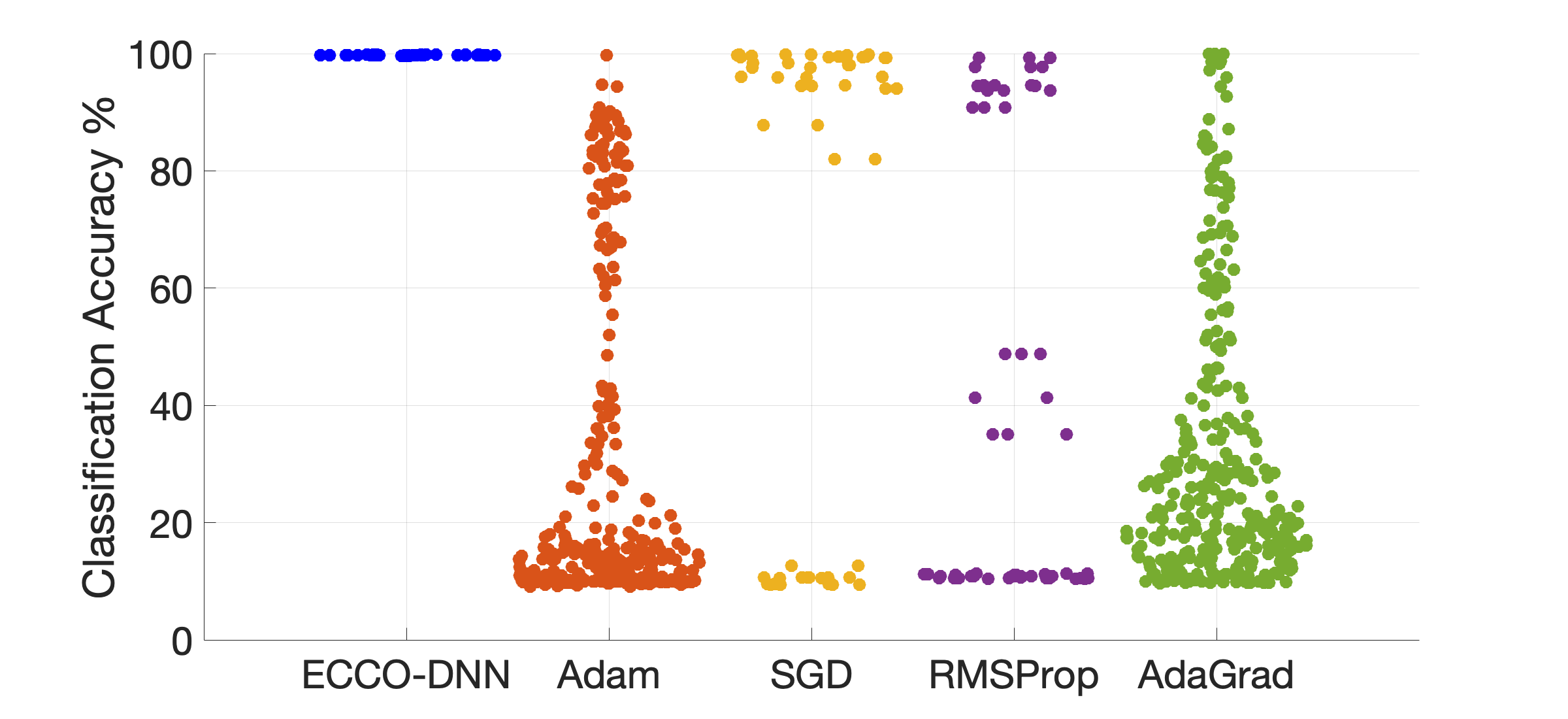}
  \captionof{figure}{Random search for Hyperparameters for classifying MNIST data using 4-layer DNN}
  \label{fig:mnist}
\end{figure}
\else
\begin{figure}
  \centering
  \includegraphics[width=.65\linewidth]{Figures/mnist_training_loss.png}
  \captionof{figure}{Training Loss for DNN Classifying MNIST Data using Optimizers over 20 Epochs}
  \label{fig:mnist_training}
\end{figure}
\begin{figure}
  \centering
  \includegraphics[width=.8\linewidth]{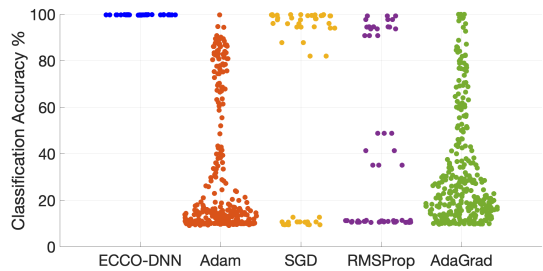}
  \captionof{figure}{Random search for Hyperparameters for classifying MNIST data using 4-layer DNN}
  \label{fig:mnist}
\end{figure}
\fi

\subsection{Training DNN models to Classify CIFAR-10}
\label{sec:cifar_experiment}
The sensitivity to hyperparameter selections is next demonstrated by training DNN models to classify CIFAR-10. We trained three models (Resnet18 \cite{resnet}, Lenet5 \cite{lenet}, and SeNet-18 \cite{senet}) using the ECCO-DNN and the comparison methods with a minibatch size of 128 for 20 epochs. The models used different activation functions in the hidden layers, which demonstrates the applicability of ECCO-DNN and the activation-specific limiting on different models.\footnote{The optimally tuned hyperparameter values are reported in Tables \ref{tab:adam_optimal_hyperparameters}-\ref{tab:ec_optimal_hyperparameters} in the Appendix.}

\ifarxiv
\begin{table}
  \caption{Final Test Accuracy of CIFAR-10 Experiment Trained by ECCO-DNN Versus  Tuned Comparison Methods}
\label{tab:cifar10_acc}
  \centering
  \resizebox{0.5\columnwidth}{!}{%
  \begin{tabular}{llllll}
    \toprule
      & ECCO-DNN & Adam & SGD & RMSProp & AdaGrad \\
    \midrule
     Resnet18 & 92.7 & 95.9 & 94.3 & 92.8 & 92.9 \\ \midrule
     Lenet5 & 61.1 & 60.4 & 63.1 & 54.5 & 62.5 \\ \midrule
     SeNet-18 & 94.1 & 89.4 & 95.3 & 93.3 & 92.7 \\ 
    \bottomrule
  \end{tabular}%
  }
\end{table}
\else
\begin{table}
  \caption{Final Test Accuracy of CIFAR-10 Experiment Trained by ECCO-DNN Versus  Tuned Comparison Methods}
\label{tab:cifar10_acc}
  \centering
  \resizebox{\columnwidth}{!}{%
  \begin{tabular}{llllll}
    \toprule
      & ECCO-DNN & Adam & SGD & RMSProp & AdaGrad \\
    \midrule
     Resnet18 & 92.7 & 95.9 & 94.3 & 92.8 & 92.9 \\ \midrule
     Lenet5 & 61.1 & 60.4 & 63.1 & 54.5 & 62.5 \\ \midrule
     SeNet-18 & 94.1 & 89.4 & 95.3 & 93.3 & 92.7 \\ 
    \bottomrule
  \end{tabular}%
  }
\end{table}
\fi

First, note that the models trained by ECCO-DNN achieved similar test accuracy to the comparison methods, as shown in Table \ref{tab:cifar10_acc}. ECCO-DNN follows a similar training trajectory to those of the optimally tuned optimizers as shown in \ref{sec:training_loss_curve_resnet_18}. Next, a random search was conducted with 200 hyperparameter values sampled from operating range (with $\eta \in (0,100]$). Figure \ref{fig:cifar} illustrates the classification accuracies obtained from each sample in the search. Our results indicate that the comparison methods exhibited significant variability in classification accuracy and required extensive tuning to achieve acceptable performance. In contrast, ECCO-DNN achieved near-optimal classification accuracies for $\eta$ in 4 orders of magnitude (from 0.1 to 100).\footnote{The mean and standard deviations of these experiments are listed in Tables \ref{tab:resnet_random_search}-\ref{tab:senet_random_search} in the Appendix.}
\footnote{\label{wallclock_footnote}The per-epoch wall clock time is provided in Table \ref{tab:wallclock_times}.}

We further investigate the optimizers' sensitivities to their hyperparameters by uniformly perturbing the values (as found by grid search) within a normalized $\varepsilon=0.1$ ball. The resulting classification accuracies of the models are shown in Figure \ref{fig:cifar_perturbed}, and the means and standard are shown in Tables \ref{tab:resnet_perturbed_search} - \ref{tab:senet_perturbed_search} in the Appendix. We see a dramatic range of classification accuracies for all optimizers except for ECCO-DNN, which demonstrates insensitivity to any hyperparameter selection for all three models. This experiment shows that the comparison methods require prior knowledge or significant computation, data, and time to implement effective training routines, whereas ECCO-DNN can be directly implemented with minimal tuning of its single hyperparameter.


\ifarxiv
\begin{figure*}
    \centering
    \includegraphics[width=0.99\textwidth]{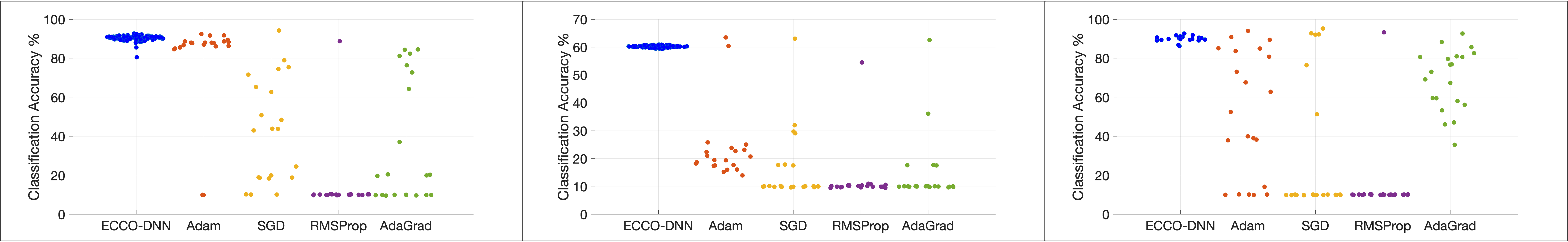}
    \caption{Hyperparameters perturbed within a normalized $\varepsilon=0.1$ ball around optimally tuned hyperparameter values for Resnet-18 (left), Lenet5 (middle) and Senet-18 (right) models.}
    \label{fig:cifar_perturbed}
\end{figure*}
\else
\begin{figure*}
    \centering
    \includegraphics[width=0.9\textwidth]{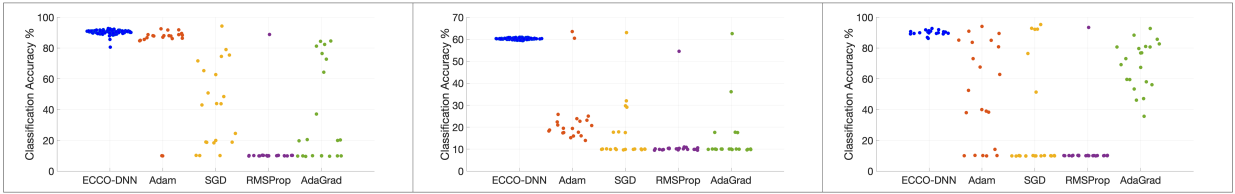}
    \caption{Hyperparameters perturbed within a normalized $\varepsilon=0.1$ ball around optimally tuned hyperparameter values for Resnet-18 (left), Lenet5 (middle) and Senet-18 (right) models.}
    \label{fig:cifar_perturbed}
\end{figure*}
\fi

\subsection{Training LSTM Model on Power Systems Dataset}
\label{sec:lstm_experiment}
In this experiment, we trained a LSTM model presented in \cite{lstm_model} to analyze a dataset of the power usage patterns of a single household \cite{power_dataset} and predict household active power consumption. As \cite{lstm_model} did not publish their hyperparameters, we first randomly sampled values within the parameter domains, trained the model for 200 epochs, and recorded the resulting mean squared error (MSE) as shown in Figure \ref{fig:lstm}. Using ECCO-DNN with  $\eta\in(0,100]$ achieved an average MSE of 0.0247 with a standard deviation of 9.2e-5. The comparison methods, however, exhibited high variance and most yield unusable results. While SGD performed the best, it still had almost 25x the variance of ECCO-DNN and 22\% of samples result in gradient overflow (i.e. NaNs). \cref{wallclock_footnote}

To verify the performance of ECCO-DNN in training the LSTM model, we employed an auto-tuner (RayTune \cite{raytune}) to search for the hyperparameters of the comparison methods that provided the lowest MSE (Tables \ref{tab:adam_optimal_hyperparameters}-\ref{tab:ec_optimal_hyperparameters} in the Appendix). As shown in Table \ref{tab:lstm_tuned_comparison}, the MSE values achieved by the tuned optimizers (MSE$=0.0244$) are within 1.2\% of the average MSE error obtained using ECCO-DNN (average MSE $=0.0247$). Furthermore, ECCO-DNN  demonstrated comparable efficiency in terms of wallclock time when compared to other optimizers, as shown in Table \ref{tab:wallclock_times}.

\ifarxiv
\begin{figure}
    \centering
\includegraphics[width=0.55\columnwidth]{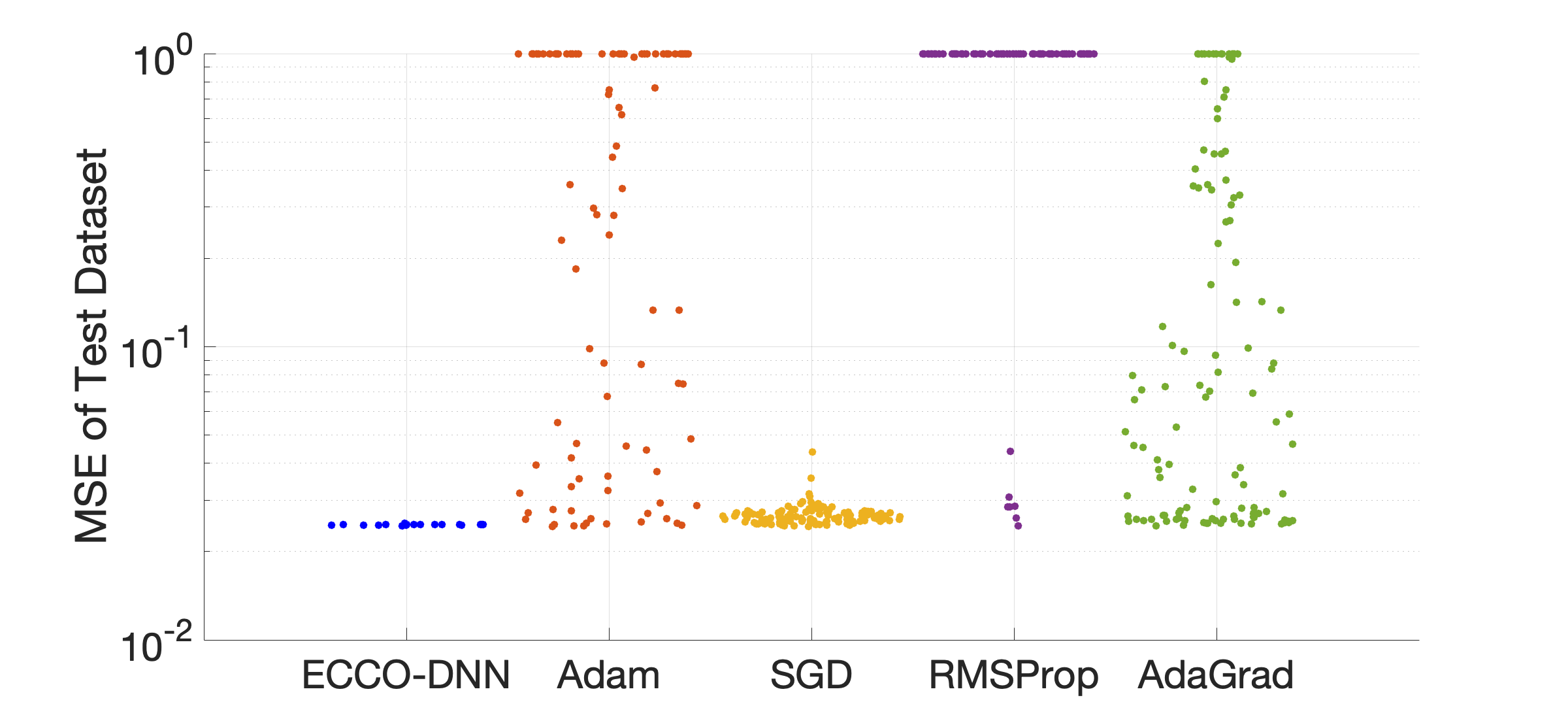}
    \caption{Random search for LSTM model \cite{lstm_model} to perform regression on \cite{power_dataset} dataset (MSE)}
    \label{fig:lstm}
\end{figure}
\else
\begin{figure}
    \centering
\includegraphics[width=0.8\columnwidth]{Figures/lstm_results.png}
    \caption{Random search for LSTM model \cite{lstm_model} to perform regression on \cite{power_dataset} dataset (MSE)}
    \label{fig:lstm}
\end{figure}
\fi

\ifarxiv
\begin{table}
  \caption{Percent of runs that failed or resulted in ill-fitted models during hyperparameter random search. We define an ill-fitted model as one that yields a classification accuracy $<12\%$  and a LSTM model that results in a MSE$>1$. }
  \centering
  \begin{tabular}{llllll}
    \toprule
      & ECCO-DNN & Adam & SGD & RMSProp & AdaGrad \\
    \midrule
    4-Layer DNN (MNIST) & 0 & 41.4 & 30.1 & 54.9 & 18.4 \\ \midrule
     Resnet18 (CIFAR-10) & 0 & 43.3 & 17.5 & 64.1 & 11.7 \\ \midrule
     Lenet5 (CIFAR-10) & 0 & 16.7 & 12.2 & 52.5 & 17.4 \\ \midrule
     SeNet-18 (CIFAR-10) & 0 & 60 & 57.8 & 70.2 & 11.6 \\ \midrule
     LSTM (Power) & 0 & 48.3 & 2.51 & 63.8 & 38.6 \\
    \bottomrule
  \end{tabular}%
    \label{tab:failed}
  \end{table}
\else
\begin{table}
  \caption{Percent of runs that failed or resulted in ill-fitted models during hyperparameter random search. We define an ill-fitted model as one that yields a classification accuracy $<12\%$  and a LSTM model that results in a MSE$>1$. }
  \centering
  {\tiny
  \begin{tabular}{llllll}
    \toprule
      & ECCO-DNN & Adam & SGD & RMSProp & AdaGrad \\
    \midrule
    4-Layer DNN (MNIST) & 0 & 41.4 & 30.1 & 54.9 & 18.4 \\ \midrule
     Resnet18 (CIFAR-10) & 0 & 43.3 & 17.5 & 64.1 & 11.7 \\ \midrule
     Lenet5 (CIFAR-10) & 0 & 16.7 & 12.2 & 52.5 & 17.4 \\ \midrule
     SeNet-18 (CIFAR-10) & 0 & 60 & 57.8 & 70.2 & 11.6 \\ \midrule
     LSTM (Power) & 0 & 48.3 & 2.51 & 63.8 & 38.6 \\
    \bottomrule
  \end{tabular}%
    \label{tab:failed}
  \end{table}}
  \fi

\section{Discussion}
\label{sec:discussion}
The considered experiments clearly demonstrate that ECCO-DNN can train neural networks to have comparable performance to state-of-the-art methods but without the time or data needed for hyperparameter tuning. Experiments showed that while optimally-tuned comparison methods had a minor advantage on an individual basis, ECCO-DNN's results were on par across the ensemble of methods, and were significantly faster to compute; while the comparison methods required search time for hyperparameter values, that step was unnecessary for ECCO-DNN. These observations suggest that ECCO-DNN is well-suited for applications involving rapid prototyping, time-varying data, or those sensitive to initializations. 


In Table \ref{tab:failed}, we illustrate the percentage of diverged runs in our random-search experiments. Existing optimizers often lead to a high percentage of failed runs due to ill-fitted models, defined as those with less than 12\% classification accuracy or more than 1 MSE for training the power consumption dataset. This high rate of unusable models is problematic for resource-constrained applications, such as power system analysis, where limited computational resources demand swift model development to analyze the surge of new smart meter installations.

Although current implementations of ECCO-DNN do come with an additional per-epoch runtime cost (see Table \ref{tab:wallclock_times}), we believe that this cost is outweighed by the time and resources saved by avoiding extensive hyperparameter tuning through multiple experiments. Additionally, ECCO-DNN is capable of scaling to larger models and datasets. This is demonstrated in \ref{sec:trainig_cifar_100} by training Resnet-50 and Resnet-101 models on CIFAR-100.

\section{Conclusion}

We introduce a stochastic first-order gradient-based optimizer specialized for deep neural networks that is agnostic to hyperparameters. The optimizer leverages a dynamical system model of the optimization process to shape the trajectory of the parameters and select time steps using properties of numerical integration and knowledge of the network structure. The optimizer is not sensitive to its single hyperparameter, which can vary in three orders of magnitude without affecting performance. We demonstrate this by training DNN models on datasets including MNIST, CIFAR-10 and a dataset characterizing household power consumption. ECCO-DNN exhibits comparable performance to tuned state-of-the-art optimizers Adam, SGD, RMSProp and AdaGrad. For applications requiring fast prototyping, or in which a priori knowledge on hyperparameters is unavailable, ECCO-DNN significantly reduces the time, required data, and computational requirement for hyperparameter tuning. These results suggest that ECCO-DNN is well-equipped for a practitioner's toolkit to train neural networks, and is a critical step towards hyperparameter-agnostic training.
\clearpage

\bibliography{neurips_2023}

\clearpage
\appendix
\onecolumn
\section{Derivation of \ref{z true}} \label{gf derivation}
\begin{align}
    &\max_{Z}- \frac{1}{2} \frac{d}{dt}\Vert \dfdxt \Vert^2,\\
    &=\max_{Z} - \dfdxt^{\top} \frac{d}{dt} \dfdxt, \label{passivity_max_Z}\\
    &=\max_Z \dfdxt^{\top}\nabla^2 f(\xt)Z^{-1}\dfdxt,\label{max dvdt}
\end{align}

The optimization problem \eqref{max dvdt} must yield a diagonal $Z$. To this end, we expand $\dfdxt$ to a diagonal matrix and shrink $Z^{-1}$ to a vector. Define $G(\xt)$ be a diagonal matrix where the diagonal elements are the gradient $G_{ii}(\xt) = \frac{\partial f(\xt)}{\partial \x_i(t)}$. Let $\mathbf{z}$ be a vector where $\mathbf{z}_i=Z_{ii}^{-1}$. Then \eqref{max dvdt} is equal to,
\begin{align}
    &=\max_{\z} \dfdxt^{\top}\nabla^2 f(\xt)G(\xt)\z-\frac{\delta}{2} \Vert \z\Vert^2,
\end{align}
where a regularization term with $\delta>0$ has been added for tractability. Taking the derivative $\frac{\partial}{\partial \z}$:
\begin{gather}
    G(\xt)\nabla^2f(\xt) \dfdxt -\delta \z \equiv \vec{0}.\\
    \z = \frac{1}{\delta} G(\xt) \nabla^2 f(\xt)\dfdxt.\label{final z}
\end{gather}
Any negative value of $\z_i$ occurs when $ G_{ii}(\xt)\nabla^2f(\xt) \dfdxt = Z_{ii}(\x)\dot{\x}_i(t) \frac{d}{dt} \dfdxt$ is negative. 
To ensure that $Z^{-1}$ performs as least as well as the base case where $Z^{-1}=I$ (and to retain positivity and invertibility of $Z$), any $\z_i<1$ is truncated to $\z_i=1$. For the same reasons, we set $\delta\equiv 1$ in the main paper.  

For any $\delta>0$, the final construction of the control matrix $Z(\xt)^{-1}$ is:
\begin{equation}
    Z_{ii}(\xt)^{-1} = \max\{\delta^{-1} [G(\xt) \nabla^2 f(\xt)\dfdxt]_i,1\}.
\end{equation}

This construction is a second order method as it uses Hessian information. In comparison to Newton methods, however, this $Z$ does not require a Hessian inversion step.

\section{Proof of Theorem \ref{convergence theorem}}
\label{main proof}

\begin{align}
    \frac{d}{dt}f(\xt)&=\langle \dfdxt,\dxdt(t)\rangle \\
    &= -\dfdxt^{\top} Z(\xt)^{-1}\dfdxt\\
    &\leq -d_1\Vert \dfdxt\Vert^2 \label{bounded dfdt}
\end{align}
Where \eqref{bounded dfdt} holds by Assumption \ref{z def}. 

Eq. \eqref{bounded dfdt} implies that the objective function is non-increasing along $\xt$.

For $t>0$ consider $\int_0^t \Vert\dfdxt\Vert^2dt$. By \eqref{bounded dfdt},
\begin{align}
    \int_0^t\Vert\dfdxt\Vert^2dt&\leq\frac{1}{d_1}(f(\x(0))-f(\x(t)))\\
    &\leq \frac{2R}{d_1} \label{bounded int norm}
\end{align}
Where \eqref{bounded int norm} holds by Assumption \ref{a1}.

Then for all $t>0$,
\begin{equation}
     \int_0^t\Vert\dfdxt\Vert^2dt\leq \frac{2R}{d_1}<\infty
\end{equation}

For all $\x$, $\y\in \R^n$,
\begin{align}
    \big \vert\Vert \dfdx \Vert^2 - \Vert \dfdy \Vert^2 \big \vert &\leq \big \vert \Vert \dfdx \Vert + \Vert \dfdy \Vert \big \vert \ \big \vert \Vert \dfdx \Vert -\Vert \dfdy \Vert \big \vert\\
    &\leq 2B \big \vert \Vert \dfdx \Vert - \Vert \dfdy \Vert \big \vert\\
    &\leq 2B \Vert \dfdx - \dfdy\Vert\\
    &\leq 2BL \Vert \x-\y \Vert
\end{align}

Therefore we can conclude that $\Vert \dfdx\Vert^2:\R^n\to\R$ is Lipschitz and hence uniformly continuous.

By Assumptions \ref{a4} and \ref{z def}, $\Vert \dxdt(t)\Vert$ is bounded and so $\xt$ is uniformly continuous in $t$.

Therefore, the composition $\Vert \dfdxt \Vert^2: \R_+\to\R$ is uniformly continuous in $t$. 

Since $\int_0^t\Vert\dfdxt\Vert^2dt<\infty$ and $\Vert\dfdxt\Vert^2$ is a uniformly continuous function of $t$, we conclude that $\lim_{t\to\infty}\Vert\dfdxt\Vert=0$.

\section{Proof of Lemma \ref{bounded z true}}\label{zbounded}

\begin{proof}
Note that the boundedness condition in Assumption \ref{z def} will hold if $0< \sum_{i=1}^n Z_{ii}(\xt)^{-1}<\infty$.

Lower bound:
\begin{align}
    Z_{ii}^{-1}(\xt) = \max\{\delta^{-1} [G(\xt) \nabla^2 f(\xt)\dfdxt]_i,1\}\geq 1>0.
\end{align}
Upper bound:
\begin{align}
     \sum_{i=1}^n Z_{ii}(\xt)^{-1} &\leq \mathbf{1}^{\top}\Big(\delta^{-1}G(\xt)\nabla^2f(\xt)\dfdxt\Big)+\sum_{i=1}^n 1\\
     &=\delta^{-1}\mathbf{1}^{\top}G(\xt)\nabla^2f(\xt)\dfdx +n\\
     &=\delta^{-1}\dfdxt^{\top}\nabla^2f(\xt)\dfdx +n\\
     &\leq \delta^{-1}\lambda_{\max}\big(\nabla^2 f(\xt)\big)\Vert \dfdxt\Vert^2+n\label{bound w lambda}\\
     &\leq \delta^{-1}\lambda_{\max}\big(\nabla^2 f(\xt)\big)B^2+n\\
     &\leq \delta^{-1}B^2\lambda_{\sup}+n<\infty
\end{align}
Where $\sup\lambda_{\max}$ exists and is finite under \ref{a3}.
\end{proof}

\section{Derivation of \eqref{z approx}}\label{z approx deriv}

The proposed control scheme may be used directly as in \eqref{z true}; however, it requires computation of the full Hessian. For applications in which the Hessian in unavailable or expensive to compute, such as machine learning, we present an approximation that may be computed from only gradient information.

We form $\widehat{Z}$ by approximating the trajectory of the optimization variable at a specific $t$ and solving for $Z$ as a function of $\ahat$; we do not estimate the Hessian matrix, and therefore $\widehat{Z}$ is not a quasi-Newton method. The proposed iterative update using \eqref{z approx} is a scaled gradient or step-size normalized descent method, similar in spirit to normalized gradient flow (in continuous time) and step-size or momentum scaling methods methods such as Adam, RMSprop, and Adagrad.

Recall the approximation of $\frac{d}{dt}\dfdxt$ as in \eqref{ahat} and then,
\begin{align}
    \ahat(\xt)&= \frac{\dfdxt - \nabla f(\x (t-\Delta t))}{\Delta t}\nonumber\\
    &\approx \frac{d}{dt}\dfdxt\\
    &=\frac{d\dfdxt}{d\xt} \frac{d\xt}{dt}\\
        &=\nabla^2 f(\xt)(-Z(\xt)^{-1}\dfdxt)\\
        &=-\nabla^2 f(\xt) Z(\xt)^{-1}\dfdx.
\end{align} 
Pre-multiply by $\delta^{-1}G(\x(t))$:
\begin{align}
    \delta^{-1}G(\xt)\ahat(\xt)\approx -\delta^{-1}G(\xt) \nabla^2 f(\xt) Z(\xt)^{-1} \dfdxt. \label{deriv 1}
\end{align}
As $Z$ is a diagonal matrix, the expression on the RHS of \eqref{deriv 1} can be rewritten as,
\begin{align}
    &-\delta^{-1}G(\xt) \nabla^2 f(\xt) Z(\xt)^{-1} \dfdxt\nonumber\\
    &=- \delta^{-1}G(\xt)\nabla^2 f(\xt)\begin{bmatrix}
    Z_{11}(\xt)^{-1}\frac{\partial f(\xt)}{\partial \x_1}\\ \vdots\\ Z_{nn}(\xt)^{-1}\frac{\partial f(\xt)}{\partial \x_n}
    \end{bmatrix} \\
     &=-\delta^{-1}G(\xt)\nabla^2 f(\xt)\begin{bmatrix}
        \frac{\partial f(\xt)}{\partial x_1} & 0 &\dots & 0\\
        0  & \frac{\partial f(\xt)}{\partial x_2} & \ddots & 0\\
        \vdots & \vdots & \ddots & \vdots \\
        0 & 0 & \dots & \frac{\partial f(\xt)}{\partial x_n}
    \end{bmatrix}\begin{bmatrix}
        Z_{11}(\xt)^{-1}\\ \vdots \\ Z_{nn}(\xt)^{-1}
    \end{bmatrix}\\
    &=-\delta^{-1}G(\xt)\nabla^2 f(\xt)G(\xt)\z(\xt). \label{deriv 4}
\end{align}
In comparison, the objective is to compute:
\begin{align}
    \z(\xt)&= \delta^{-1}G(\xt) \nabla^2 f(\xt) \dfdxt \label{deriv 2}\\
    &= \delta^{-1}G(\xt) \nabla^2 f(\xt)G(\xt) \mathbf{1}\label{deriv 3}.
\end{align}
Where \eqref{deriv 2} follows from \eqref{final z} and \eqref{deriv 3} holds because by definition $G(\xt)_{ii} = [\dfdxt]_i$.

Define the shorthand notation for the expressions $\Ab\triangleq G(\xt)\nabla^2 f(\xt)G(\xt)$ and $\z\triangleq \z(\xt)$. Then \eqref{deriv 4} becomes $-\delta^{-1}\Ab\z$ and similarly \eqref{deriv 3} becomes $\delta^{-1}\Ab\mathbf{1}$. From \eqref{deriv 1} and \eqref{deriv 4},
\begin{equation}
    -\delta^{-1}\Ab \z \approx \delta^{-1}G(\xt)\ahat(\xt) \label{deriv 6}
\end{equation}
Use the definition of $\z$ in \eqref{deriv 3} and substitute into \eqref{deriv 6}:
\begin{align}
    \z &= \delta^{-1}\Ab \mathbf{1} \Rightarrow -\delta^{-1}\Ab \z = -\delta^{-2}\Ab\Ab\mathbf{1} \approx   \delta^{-1}G(\xt)\ahat(\xt)
\end{align}
Pre-multiply by $-\mathbf{1}^{\top}$ and note that $\Ab = \Ab^{\top}$ :
\begin{align}
    (\delta^{-1}\Ab\mathbf{1})^{\top}(\delta^{-1}\Ab\mathbf{1}) &\approx -\delta^{-1}\mathbf{1}^{\top}G(\xt)\ahat(\xt)\\
    \z^{\top}\z&\approx-\delta^{-1}\mathbf{1}^{\top}G(\xt)\ahat(\xt) \label{deriv 5}
\end{align}

One possible solution to \eqref{deriv 5} is thus,
\begin{align}
    &\z_i^2(\xt)\triangleq -[\delta^{-1}G(\xt)\ahat(\xt)]_i\\
    &\Rightarrow \z_i(\xt) = \sqrt{\max\{-[\delta^{-1}G(\xt)\ahat(\xt)]_i, 0\}}.
\end{align}

Similarly to the derivation of \eqref{z true}, we truncate $\z_i(\xt)\geq 1$ to finish the derivation.

\section{Proof of Lemma \ref{bounded z approx}} \label{zhat bounded}

\begin{proof}
Note that the boundedness condition in Assumption \ref{z def} will hold if $0< \sum_{i=1}^n \hat{Z}_{ii}(\xt)^{-1}<\infty$.

Lower bound:
\begin{align}
    \hat{Z}_{ii}(\xt)^{-1}=\max\{\sqrt{-\delta^{-1}[G(\xt)\ahat(\xt)]_i},1\}\geq 1 >0.
\end{align}
Upper bound:
\begin{align}
    \hat{Z}_{ii}(\xt)^{-1} &\leq \max\{\sqrt{-\delta^{-1}[G(\xt)\ahat(\xt)]_i},1\}^2\\
    \sum_{i=1}^n \hat{Z}_{ii}(\xt)^{-1} &\leq  \sum_{i=1}^n -\delta^{-1}[G(\xt)\ahat(\xt)]_i+\sum_{i=1}^n 1\\
    &= \delta^{-1}(\Delta t)^{-1}\Big(\dfdxt^{\top} \nabla f(\x(t-\Delta t)) -\dfdxt^{\top}\dfdxt \Big)+n\\
    &=\delta^{-1}(\Delta t)^{-1}\dfdxt^{\top}\Big( \nabla f(\x(t-\Delta t)) -\dfdxt \Big)+n\\
    &\leq\delta^{-1}(\Delta t)^{-1}\left\vert\dfdxt^{\top}\Big(  \nabla f(\x(t-\Delta t)) -\dfdxt \Big)\right\vert+n\\
    &\leq \delta^{-1}(\Delta t)^{-1}\Vert \dfdxt\Vert \Vert \nabla f(\x(t-\Delta t)) -\dfdxt\Vert+n\\
    &\leq \delta^{-1}(\Delta t)^{-1} BL\Vert \x(t-\Delta t)-\x(t)\Vert+n\\
    &\leq \delta^{-1}(\Delta t)^{-1} BL\max_{\x^*\in S}\Vert \x(0)-\x^*\Vert+n<  \infty
\end{align}
Where the last line holds because $\Vert \x(0)-\x^*\Vert$ is bounded as $\x(0)$ is given to be finite and $\x^*$ is finite due to \ref{a2}.
\end{proof}

\section{Computation Complexity}
\subsection{Proof of Lemma \ref{z true complexity}} \label{complexity full}
\begin{lemma} \label{z true complexity}
    Given the gradient and Hessian, the computation complexity to evaluate $Z^{-1}$ at a specific $\xt$ is $\mathcal{O}(n^2)$.
\end{lemma}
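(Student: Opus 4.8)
The plan is to simply count the elementary arithmetic operations required to evaluate the closed form \eqref{z true}, namely $Z_{ii}(\xt)^{-1} = \max\{[G(\xt)\nabla^2 f(\xt)\dfdxt]_i,\,1\}$, under the standing assumption of this lemma that the gradient vector $\dfdxt \in \R^n$ and the Hessian matrix $\nabla^2 f(\xt) \in \R^{n\times n}$ are available as inputs.

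First I would parse the triple product $G(\xt)\nabla^2 f(\xt)\dfdxt$ from right to left. Computing the matrix--vector product $w \triangleq \nabla^2 f(\xt)\dfdxt$ costs $\mathcal{O}(n^2)$ operations (an inner product of length $n$ for each of the $n$ rows). Next, since $G(\xt)$ is by definition the diagonal matrix with $G_{ii}(\xt) = [\dfdxt]_i$, forming $G(\xt)w$ amounts to the componentwise product $\big([\dfdxt]_i\, w_i\big)_{i=1}^n$, costing $\mathcal{O}(n)$. Finally, applying the componentwise $\max\{\cdot,1\}$ to the resulting $n$-vector is another $\mathcal{O}(n)$. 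The dominant term is the matrix--vector multiplication, so the total is $\mathcal{O}(n^2)$.

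I would also remark that the update rule only ever needs the $n$ scalars $Z_{ii}(\xt)^{-1}$, so neither the diagonal matrix $Z$ nor its inverse is ever materialized, and no extra cost is incurred beyond the above. There is essentially no obstacle here; the only point to be careful about is to never form the $n\times n$ matrix products $G(\xt)\nabla^2 f(\xt)$ or $\nabla^2 f(\xt)G(\xt)$ explicitly and to associate the product so that only matrix--vector operations appear — although even row/column scaling of the Hessian is itself $\mathcal{O}(n^2)$, so the asymptotic bound is robust to the order of association. (One may additionally note the bound is tight, since merely reading the Hessian is $\Theta(n^2)$.)
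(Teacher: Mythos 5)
Your proposal is correct and follows essentially the same operation-counting argument as the paper, which writes the same triple product element-wise as $Z_{ii}^{-1}(\xt) = \max\{\delta^{-1}\,\partial_i f(\xt)\sum_{j=1}^n \partial^2_{ij} f(\xt)\,\partial_j f(\xt),\,1\}$ and notes each of the $n$ entries costs $\mathcal{O}(n)$, i.e.\ the same matrix--vector product you identify. Your additional remarks on association order and tightness are fine but not needed.
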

\begin{proof}

Given the gradient and Hessian, the per-iteration computation complexity to evaluate $Z$ according to \eqref{z true} is $\mathcal{O}(n^2)$. This can be verified as each element $i$ requires $\mathcal{O}(n)$ computations:
\begin{equation}
    Z_{ii}^{-1}(\xt) = \max\left\{\delta^{-1} \frac{\partial f(\xt)}{\partial \x_i(t)}\sum_{j=1}^n \frac{\partial^2 f(\xt)}{\partial \x_i \partial \x_j}\frac{\partial f(\xt)}{\partial \x_j}, 1\right\}
\end{equation}
\end{proof}

\subsection{Proof of Lemma \ref{z approx complexity}} \label{complexity approx}
Given the current and previous gradient, the per-iteration computation complexity to evaluate $\widehat{Z}$ according to \eqref{z approx} is $\mathcal{O}(n)$. This can be verified as each element $i$ requires $\mathcal{O}(1)$ computations:
\begin{equation}
    \hat{Z}_{ii}^{-1}(\xt) = \max\left\{-\delta^{-1}\frac{1}{\Delta t} \frac{\partial f(\xt)}{\partial \x_i(t)} \left(\frac{\partial f(\xt)}{\partial \x_i}-\frac{\partial f(\x(t-\Delta t))}{\partial \x_i}\right), 1\right\}
\end{equation}

\section{Equivalent Circuit Model}
\label{sec:equivalent-circuit}
As stated, the ECCO-DNN algorithm was developed from dynamical systems model of the optimization trajectory.
This model was used to help derive and refine the control matrix $Z$ of the algorithm. We specifically use the domain of electrical circuits to model the gradient flow, as the circuit model provides physics-based insights to facilitate the construction of $Z$.

To demonstrate that the scaled gradient flow is equivalent to a circuit, we present a circuit whose transient response is equivalent to \eqref{scaled gradient flow}. The equivalent circuit (EC) is composed of $n$ sub-circuits, with each sub-circuit representing the transient waveform of a single variable, $\x_i(t)$. The diagram of a single sub-circuit is shown in Figure \ref{fig:ECorig}. Each sub-circuit is composed of two elements: a nonlinear capacitor (on the left) and a voltage controlled current source (VCCS) represented by the element on the right. Note that for multi-dimensional $\x$, the sub-circuits are coupled via the VCCSs. 

The voltage across the capacitor is defined to be $\x_i(t)$, and the capacitance of the capacitor is likewise defined as $Z_{ii}(\xt)$. Based on the voltage-current relationship of a capacitor, the current will be equal to $I_i^c(\x(t))=Z_{ii}(\xt)\dxdt_i(t)$ where $c$ labels attachment to the capacitor and $i$ indexes the element of $\x$. From Kirchhoff's Current Laws (KCL) \cite{desoer2010basic}, the capacitor current must be equal to the negative of the current produced by the VCCS; therefore $Z(\xt)\dot{\x}(t)=-\dfdxt$ which is identical to \eqref{scaled gradient flow}. If the circuit reaches steady-state at some time $t'$, the capacitor no longer produces a current ($I_c(\x(t))_{t\geq t'}=0$) and therefore the node voltage $\x(t)_{t\geq t'}$ remains stationary. By KCL, the VCCS elements also produce zero current at steady-state ($\dfdxt_{t\geq t'}=\vec{0}$), implying that we have reached a point where $\dfdxt=\vec{0}$, which is defined as a critical point of the objective function. 




\begin{figure}
\centering
\begin{minipage}[t]{.48\linewidth}
  \centering
    \begin{circuitikz}[scale=0.45]
    \ctikzset{label/align = rotate}
    \draw
    (0,0) to[C=\small $Z_{ii}(\xt)$] (0,3)
      to[short] (3,3)
      to[cI, l=\small $\frac{\partial f(\x(t))}{\partial \x_i}$, label distance=3pt] (3,0)
      to[short] (0,0)
    ;
    \draw (3,3) to [short,-o] (4,3) node[above]{\small $\x_i(t)$};
    \draw (1.5,0) node[ground] (){};
    \end{circuitikz}
    \caption{\small{Equivalent Circuit Model of \eqref{scaled gradient flow}} }
    \label{fig:ECorig}
\end{minipage}%
\hfill
\begin{minipage}[t]{.48\linewidth}
  \centering
   \begin{circuitikz}[scale = 0.45]
   \ctikzset{label/align = rotate}
   \draw
    (0,0) to[C, i^<= \small\color{white} $0$, ] (0,3)
      to[short] (3,3)
      to[cI, l=\small $\quad \frac{d}{dt} \frac{\partial f(\x(t))}{\partial \x_i}$, label distance=3pt] (3,0)
      to[short] (0,0)
    ;
    \draw (3,3) to [short,-o] (4,3) node[above]{\small $\dot\x_i(t)$};
    \draw (1.5,0) node[ground] (){};
    \draw (-0.5,3.8) circle [radius=0] node {\small $\frac{d}{dt}(Z(\xt)\dot\x(t))_i$};
    \end{circuitikz}
    \caption{\small{Adjoint Equivalent Circuit Model of \eqref{time deriv}}}
    \label{fig:ECadj}
\end{minipage}
\end{figure}

To gain insight into the energy transfer in the equivalent circuit model, we also construct a circuit representation for the behavior of $\dxdt(t)$. Taking the time derivative of \eqref{scaled gradient flow}, we construct a circuit representation called the \emph{adjoint circuit}, as shown in Figure \ref{fig:ECadj}. Similar to the equivalent circuit model, each adjoint sub-circuit is composed of a capacitor and a VCCS element, with the node voltage now representing $\dot{\x}_i(t)$. The adjoint capacitor has a current of $ \bar{I}_i^c(\x(t)) = \frac{d}{dt}[Z(\xt) \dxdt(t)]_i$ where:
\small
\begin{align}
    \frac{d}{dt}\left(Z(\xt)\dot{\x}(t)\right) &= - \frac{d}{dt} \dfdxt = -\nabla^2 f(\xt) \frac{d \x(t)}{dt},\nonumber\\
    &= \nabla^2 f(\xt) Z(\xt)^{-1} \dfdxt.\label{time deriv}
\end{align}
\normalsize


The energy of the adjoint circuit is analyzed to provide intuition on controlling the circuit; when the adjoint circuit is at steady-state then $\dot{\x}(t)=0$, meaning that the original circuit is also at a steady-state.  The capacitor in the adjoint circuit is initially charged to  $\bar{Q}_i^c(\x(0))$ and discharges to reach steady-state. The energy stored in the capacitor is proportional to the charge of the capacitor, which in turn is:
\begin{equation}
    \bar{Q}_i^c(\xt) = \int_0^t\bar{I}_i^c(\x(t))dt = Z_{ii}(\xt)\dot{\x}_i(t). \label{charge}
\end{equation}
Let $\bar{Q}_c(\xt) = [\bar{Q}^c_1(\xt), \dots, \bar{Q}^c_n(\xt)]^{\top}$.


Any component-wise scaled gradient flow problem \eqref{scaled gradient flow} satisfying \ref{a1} - \ref{z def} can thus be modeled as an EC and adjoint EC. 

We can leverage the circuit formulation to construct a control policy for the scaling matrix, $Z^{-1}$. In the circuit sense, fast convergence to a critical point of the optimization problem is equivalent to fast convergence to a stable steady-state of the dynamical system. Thus, we must choose the nonlinear capacitances $Z$ as a function of $\xt$ to discharge the capacitors as quickly as possible. 

The squared charge stored in the adjoint circuit capacitor at some time $t$ is equal to $\Vert \bar{Q}_c(\xt)\Vert^2$, which by \eqref{charge} is equal to  $\Vert Z(\xt)\dot{\x}(t))\Vert^2$ and by \eqref{scaled gradient flow} is equal to $\Vert \nabla f(\xt)\Vert^2$. Thus to quickly \emph{dissipate} charge, the following optimization problem can be defined, which maximizes the negative time gradient of the charge:

\vspace{-0.5cm}
\small
\begin{align}
    &\max_{Z}- \frac{d}{dt}\Vert \bar{Q}_c(\xt) \Vert^2=\max_{Z}- \frac{d}{dt}\Vert \dfdxt \Vert^2
\end{align}
\normalsize

This formulation serves as a physics-based argument for the construction of $Z$ in \eqref{z criteria}, which also matches the optimization-based logic that the gradient of the objective should move towards zero as time progresses.

\section{Hyperparameter Selection}

The hyperparameters and their respective operating ranges for each optimizer (Adam, SGD, AdaGrad, RMSProp, and ECCO-DNN) are provided in Tables \ref{tab:adam_hyperparameters}-\ref{tab:ec_hyperparameters}. Note, Adam and SGD are paired with a Cosine Annealing scheduler \cite{scheduler_cosine} whose maximum number of iterations, $T_{max}$, is part of the hyperparameter set. 

\begin{table}[H]
  \caption{Adam Hyperparameters and the Operating Range}
  \label{tab:adam_hyperparameters}
  \centering
  \begin{tabular}{lll}
    \midrule
    Hyperparameter     & Infimum     & Maximum \\
    \midrule
    Initial Learning Rate & 0 & 1 \\
    $\beta_1$     & 0.5 & 1      \\
    $\beta_2$ & 0.5 & 1  \\
    Scheduler $T_{max}$ & 0 & 10,000 \\
    \bottomrule
  \end{tabular}
\end{table}

\begin{table}[H]
  \caption{SGD Hyperparameters and the Operating Ranges}
  \label{tab:sgd_hyperparameters}
  \centering
  \begin{tabular}{lll}
    \midrule
    Hyperparameter     & Infimum     & Maximum \\
    \midrule
    Initial Learning Rate & 0 & 1 \\
    Weight Decay & 0 & 1 \\
    Scheduler $T_{max}$ & 0 & 10,000 \\
    \bottomrule
  \end{tabular}
\end{table}

\begin{table}[H]
  \caption{AdaGrad Hyperparameters and the Operating Ranges}
  \label{tab:adagrad_hyperparameters}
  \centering
  \begin{tabular}{lll}
    \midrule
    Hyperparameter     & Infimum     & Maximum \\
    \midrule
    Learning Rate & 0 & 1 \\
    Weight Decay & 0 & 1 \\
    Learning Rate Decay & 0 & 1 \\
    \bottomrule
  \end{tabular}
\end{table}

\begin{table}[H]
  \caption{RMSProp Hyperparameters and the Operating Ranges}
  \label{tab:rmsprop_hyperparameters}
  \centering
  \begin{tabular}{lll}
    \midrule
    Hyperparameter     & Infimum     & Maximum \\
    \midrule
    Learning Rate & 0 & 1 \\
    Weight Decay & 0 & 1 \\
    $\alpha$ & 0 & 1 \\
    \bottomrule
  \end{tabular}
\end{table}

\begin{table}[H]
  \caption{ECCO-DNN Hyperparameters and the Operating Ranges}
  \label{tab:ec_hyperparameters}
  \centering
  \begin{tabular}{lll}
    \midrule
    Hyperparameter     & Infimum     & Maximum \\
    \midrule
    LTE Tolerance ($\eta$) & 0 & 100 \\
    \bottomrule
  \end{tabular}
\end{table}

\subsection{Mean and Standard Deviations of Simulations}

The search for the optimal hyperparameters for each optimizer was accomplished by performing a random search. In this section, we provide mean and standard deviation of the classification accuracies for random-search for each simulation in Section \ref{sec:results}.


\subsubsection{Mean and Standard Deviations for Classifying MNIST in Section \ref{sec:mnist_experiment}}

We first studied the sensitivity to hyperparameters by perturbing the hyperparameters of each optimizer within a $\varepsilon=0.1$ ball to train the 4-layer DNN in \cite{mnist_model} to classify MNIST dataset. The mean and standard deviation of the classification accuracies from the random-search for each optimizer is provided in Table \ref{tab:mnist_random_search}.

Observe that ECCO-DNN with randomly chosen hyperparameters achieves a classification accuracy as expected by an optimally tuned state-of-the-art method. The low standard deviation suggests that this finding is not the result of a random good seed. In comparison, the other optimizers show a high level of sensitivity to perturbations of the hyperparameter as we observe low mean classification accuracies with a high standard deviation.

\begin{table}[H]
  \caption{Mean and Standard Deviation (Std.) of classification accuracies on test set for MNIST experiment (Section \ref{sec:mnist_experiment}) subject to random hyperparameter search for N=200}
  \label{tab:mnist_random_search}
  \centering
  \begin{tabular}{lll}
    \midrule
         & Mean     & Std. \\
    \midrule
    Adam & 28.63 & 27.67 \\
    AdaGrad & 33.30 & 24.82 \\
    SGD & 67.22 & 41.36 \\
    RMSProp & 40.73 & 37.86 \\
    ECCO-DNN & 99.73 & 0.07 \\
    \bottomrule
  \end{tabular}
\end{table}

\subsubsection{Mean and Standard Deviations for Classifying CIFAR-10 in Section \ref{sec:cifar_experiment}}
\begin{figure*}
    \centering
    \includegraphics[width=0.9\textwidth]{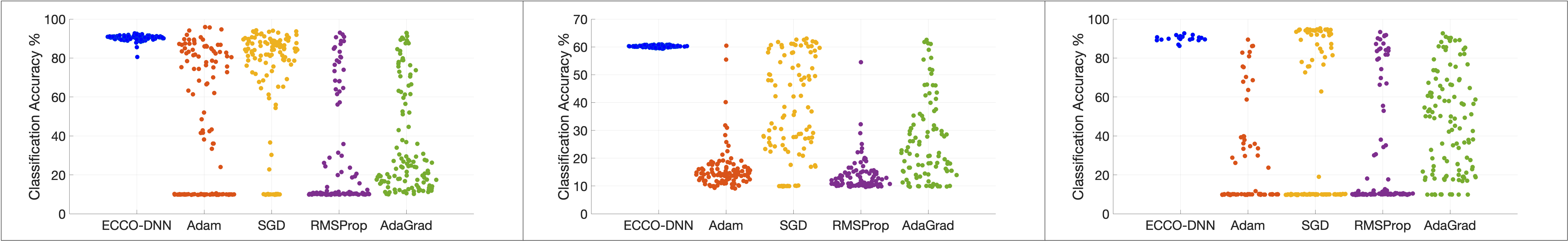}
    \caption{Random Search for Hyperparameters within operating range ball for Resnet-18 (left), Lenet5 (middle) and Senet-18 (right) models.}
    \label{fig:cifar}
\end{figure*}

We train three DNN models (Resnet-18 \cite{resnet}, Lenet-5 \cite{lenet}, and Senet-18 \cite{senet}) to classify CIFAR-10 dataset. To find the best hyperparameter selections for each optimizer, we perform a random search over the entire hyperparamter operating range. The mean and standard deviation of the classification accuracies are provided in Tables \ref{tab:resnet_random_search}-\ref{tab:senet_random_search}.

In all three models (Resnet-18, Lenet-5, and Senet-18), ECCO-DNN consistently achieves classification accuracies that are within a 5\% margin to the accuracies obtained using the optimally tuned optimizers described in Table \ref{tab:cifar10_acc}. Even when subject to a random hyperparameter search spanning the entire operation region, ECCO-DNN consistently achieves classification accuracies with a low standard deviation. This signifies that we can obtain an acceptable performance with ECCO-DNN without the need for hyperparameter tuning $\eta$. Conversely, Adam, SGD, AdaGrad, and RMSProp exhibit considerably high standard deviations in their final classification accuracies, underscoring the crucial importance of selecting appropriate hyperparameter values for optimal performance.

\begin{table}[H]
  \caption{
  Mean and Standard Deviation (Std.) of classification accuracies on test set for CIFAR-10 experiment using Resnet-10, subject to random hyperparameter search for N=200.}
  \label{tab:resnet_random_search}
  \centering
  \begin{tabular}{lll}
    \midrule
         & Mean     & Std. \\
    \midrule
    Adam & 46.52 & 34.83 \\
    AdaGrad & 36.34 & 27.25 \\
    SGD & 68.93 & 29.57 \\
    RMSProp & 27.28 & 28.34 \\
    ECCO-DNN & 90.28 & 1.44 \\
    \bottomrule
  \end{tabular}
\end{table}

\begin{table}[H]
  \caption{Mean and Standard Deviation (Std.) of classification accuracies on test set for CIFAR-10 experiment using Lenet-5, subject to random hyperparameter search for N=200.}
  \label{tab:lenet_random_search}
  \centering
  \begin{tabular}{lll}
    \midrule
         & Mean     & Std. \\
    \midrule
    Adam & 15.87 & 7.53 \\
    AdaGrad & 26.15 & 14.56 \\
    SGD & 38.17 & 17.66 \\
    RMSProp & 13.63 & 5.91 \\
    ECCO-DNN & 60.24 & 0.378 \\
    \bottomrule
  \end{tabular}
\end{table}

\begin{table}[H]
  \caption{Mean and Standard Deviation (Std.) of classification accuracies on test set for CIFAR-10 experiment using Senet-10, subject to random hyperparameter search for N=200.}
  \label{tab:senet_random_search}
  \centering
  \begin{tabular}{lll}
    \midrule
         & Mean     & Std. \\
    \midrule
    Adam & 26.98 & 25.74 \\
    AdaGrad & 45.29 & 25.86 \\
    SGD & 42.77 & 39.45 \\
    RMSProp & 26.66 & 29.37 \\
    ECCO-DNN & 90.01 & 1.57 \\
    \bottomrule
  \end{tabular}
\end{table}

We then study the sensitivity of the classification accuracy to the hyperparameters by perturbing the hyperparameters within a $\varepsilon=0.1$ ball around the optimal hyperparameters found in the random-search. Fifty hyperparamter values were randomly selected within the uniform distribution in Section \ref{sec:cifar_experiment}and the mean and standard deviation of the classification accuracies from the perturbed random search is shown in Tables \ref{tab:resnet_perturbed_search}-\ref{tab:senet_perturbed_search}.

We observe that Adam, AdaGrad, SGD, and RMSProp are highly sensitive to the hyperparameter selections, as a perturbation within $\varepsilon=0.1$ ball can cause a high variance in standard deviations. These findings highlight the importance of carefully choosing hyperparameters for these optimizers.  On the other hand, ECCO-DNN demonstrates a notable insensitivity to the value of $\eta$, whereby small perturbations around the optimal value generally have minimal impact on the overall performance of the model.

\begin{table}[H]
  \caption{Mean and Standard Deviation (Std.) of classification accuracies on test set for CIFAR-10 experiment using Resnet-10, subject to random hyperparameter search within $\varepsilon=0.1$ ball of the tuned hyperparameters for N=200.}
  \label{tab:resnet_perturbed_search}
  \centering
  \begin{tabular}{lll}
    \midrule
         & Mean     & Std. \\
    \midrule
    Adam & 80.57 & 23.58 \\
    AdaGrad & 35.78 & 31.50 \\
    SGD & 42.95 & 26.71 \\
    RMSProp & 13.93 & 17.62 \\
    ECCO-DNN & 90.28 & 1.44 \\
    \bottomrule
  \end{tabular}
\end{table}

\begin{table}[H]
  \caption{Mean and Standard Deviation (Std.) of classification accuracies on test set for CIFAR-10 experiment using Lenet-5, subject to random hyperparameter search within $\varepsilon=0.1$ ball of the tuned hyperparameters for N=200.}
  \label{tab:lenet_perturbed_search}
  \centering
  \begin{tabular}{lll}
    \midrule
         & Mean     & Std. \\
    \midrule
    Adam & 23.71 & 13.13 \\
    AdaGrad & 14.77 & 12.52 \\
    SGD & 16.46 & 12.93 \\
    RMSProp & 12.19 & 9.69 \\
    ECCO-DNN & 60.24 & 0.378 \\
    \bottomrule
  \end{tabular}
\end{table}

\begin{table}[H]
  \caption{Mean and Standard Deviation (Std.) of classification accuracies on test set for CIFAR-10 experiment using Senet-10, subject to random hyperparameter search within $\varepsilon=0.1$ ball of the tuned hyperparameters for N=200.}
  \label{tab:senet_perturbed_search}
  \centering
  \begin{tabular}{lll}
    \midrule
         & Mean     & Std. \\
    \midrule
    Adam & 51.64 & 31.95 \\
    AdaGrad & 69.05 & 15.65 \\
    SGD & 30.92 & 35.05 \\
    RMSProp & 14.01 & 18.18 \\
    ECCO-DNN & 90.01 & 1.57 \\
    \bottomrule
  \end{tabular}
\end{table}

\subsubsection{Mean and Standard Deviations for Training LSTM Model on Power Systems Dataset in Section \ref{sec:lstm_experiment}}

In Section \ref{sec:lstm_experiment}, we trained an LSTM model \cite{lstm_model} to predict the household power consumption using the dataset provided in \cite{power_dataset}. To train the LSTM model, we performed a random-search for the optimal hyperparamter values within the respective operating range for each optimizer. The mean and standard deviation of the MSE for random-search is provided in Table \ref{tab:lstm_random_search}.

This experiment demonstrates the process of hyperparameter tuning for a newly published model and dataset.  By conducting a random hyperparameter search across the entire operating range, we observe that Adam, Adagrad and RMSProp yield unusable results, as their final mean MSEs are orders of magnitude higher than their optimal MSE of 0.0244, presented in Table \ref{tab:lstm_tuned_comparison}. Although SGD produces a mean MSE comparable to its optimal value, it exhibits a substantially larger standard deviation, approximately 20 times greater than that of ECCO-DNN.

On the other hand, ECCO-DNN achieves an average MSE that is within a 2\% difference from the optimal MSE obtained by the other optimizers. Furthermore, a low standard deviation of $9.2e-5$ indicates that we can reliably use ECCO-DNN to train the LSTM model with near optimal performance.

\begin{table}[H]
  \caption{Mean and Standard Deviation (Std.) of Mean Squared Errors on test set for power systems experiment using LSTM model \cite{lstm_model}, subject to random hyperparameter search  for N=200.}
  \label{tab:lstm_random_search}
  \centering
  \begin{tabular}{lll}
    \midrule
         & Mean     & Std. \\
    \midrule
    Adam & 7.85e22 & 6.65e23 \\
    AdaGrad & 0.415 & 0.843 \\
    SGD & 0.0267 & 0.0022 \\
    RMSProp & 130.56 & 124.62 \\
    ECCO-DNN & 0.0247 & 9.2e-5 \\
    \bottomrule
  \end{tabular}
\end{table}

\subsection{Best Hyperparameter Values}

The optimal hyperparameters from the random-search for each optimizer (Adam, SGD, AdaGrad, RMSProp, and ECCO-DNN) for the experiments in \ref{sec:results} are provided in Table \ref{tab:adam_optimal_hyperparameters}-\ref{tab:ec_optimal_hyperparameters}.

\begin{table}[H]
  \caption{Optimally tuned hyperparameters for Adam as found by random search}
\label{tab:adam_optimal_hyperparameters}
  \centering
  \begin{tabular}{llllll}
    \midrule
    Hyperparameter     & MNIST Classifier & Resnet-18 & Lenet-5 & Senet-18 & LSTM \\
    \midrule
    Initial Learning Rate & 0.000726 & 0.0495 & 0.00107 & 0.904 & 0.000295 \\
    $\beta_1$ & 0.841 & 0.732 & 0.853 & 0.904 &0.895  \\
    $\beta_2$ & 0.853 & 0.626 & 0.978 & 0.975 & 0.502 \\
    Scheduler $T_{max}$ & 226 & 123 & 411 & 479 &282 \\
    \bottomrule
  \end{tabular}
\end{table}

\begin{table}[H]
  \caption{Optimally tuned hyperparameters for SGD as found by random search}
  \label{tab:sgd_optimal_hyperparameters}
  \centering
  \begin{tabular}{llllll}
    \midrule
    Hyperparameter     & MNIST Classifier & Resnet-18 & Lenet-5 & Senet-18 & LSTM \\
    \midrule
    Initial Learning Rate & 0.0292 & 0.0174 & 0.0229 & 0.100 & 0.0142\\
    Weight Decay & 1e-5 & 1e-8 & 1e-5 & 1e-5 &0.1 \\
    Scheduler $T_{max}$ & 273 & 126 & 93 & 53 &3 \\
    \bottomrule
  \end{tabular}
\end{table}

\begin{table}[H]
  \caption{Optimally tuned hyperparameters for AdaGrad as found by random search}
  \label{tab:adagrad_optimal_hyperparameters}
  \centering
  \begin{tabular}{llllll}
    \midrule
    Hyperparameter     & MNIST Classifier & Resnet-18 & Lenet-5 & Senet-18 & LSTM \\
    \midrule
    Learning Rate & 0.0588 & 0.665 & 0.00549 & 0.00365 & 0.181 \\
    Weight Decay & 1e-6 & 1e-3 & 1e-6 & 0.01 & 0.422 \\
    Learning Rate Decay & 1e-5 & 0.0603 & 1e-6 & 1e-4 & 0.198 \\
    \bottomrule
  \end{tabular}
\end{table}

\begin{table}[H]
  \caption{Optimally tuned hyperparameters for RMSProp as found by random search}
  \label{tab:rmsprop_optimal_hyperparameters}
  \centering
  \begin{tabular}{llllll}
    \midrule
    Hyperparameter     & MNIST Classifier & Resnet-18 & Lenet-5 & Senet-18 & LSTM \\
    \midrule
    Learning Rate & 0.00279 & 0.00672 & 0.002 & 0.000162 & 0.00827 \\
    Weight Decay & 1e-7 & 1e-8 & 1e-6 & 1e-7 & 0.052 \\
    $\alpha$ & 0.9427 & 0.302 & 0.922 & 0.927 & 0.829 \\
    \bottomrule
  \end{tabular}
\end{table}

\begin{table}[H]
  \caption{Optimally tuned hyperparameters for ECCO-DNN as found by random search}
  \label{tab:ec_optimal_hyperparameters}
  \centering
  \begin{tabular}{llllll}
    \midrule
    Hyperparameter     & MNIST Classifier & Resnet-18 & Lenet-5 & Senet-18 & LSTM \\
    \midrule
    LTE Tolerance ($\eta$) & 20.51 & 31.12 & 85.11 & 0.0328 & 20.71 \\
    \bottomrule
  \end{tabular}
\end{table}

The MSE of the LSTM using the optimally tuned comparison optimizers (Adam, SGD, RMSProp, and AdaGrad) is listed in Table \ref{tab:lstm_tuned_comparison}. We compare this to the average MSE from the random-search for ECCO-DNN, which demonstrates comparable performance to the tuned optimizers without any hyperparameter tuning.

\begin{table}[H]
  \caption{Comparison of MSE of LSTM using Average ECCO-DNN Values and Tuned optimizers}
\label{tab:lstm_tuned_comparison}
  \centering
  \resizebox{\columnwidth}{!}{%
  \begin{tabular}{lllll}
    \midrule
     Average ECCO-DNN Value & Tuned Adam & Tuned SGD & Tuned RMSProp & Tuned AdaGrad \\
    \midrule
     0.0247 & 0.0244 & 0.0244 & 0.0244 & 0.0244 \\
    \bottomrule
  \end{tabular}%
  }
\end{table}

\subsection{Loss Curve for Training Resnet-18 on CIFAR-10}
\label{sec:training_loss_curve_resnet_18}
The training loss for training Resnet-18 using ECCO-DNN and different settings of the comparison optimizers is illustrated in Figure \ref{fig:training_loss_resnet_18}. ECCO-DNN achieves a similar trajectory to that of optimally-tuned comparison optimizers and avoids the trajectory of the comparison optimizers with sub-optimal hyperparameter selections. 

 \begin{figure}
     \centering
     \includegraphics[width=0.9\columnwidth]{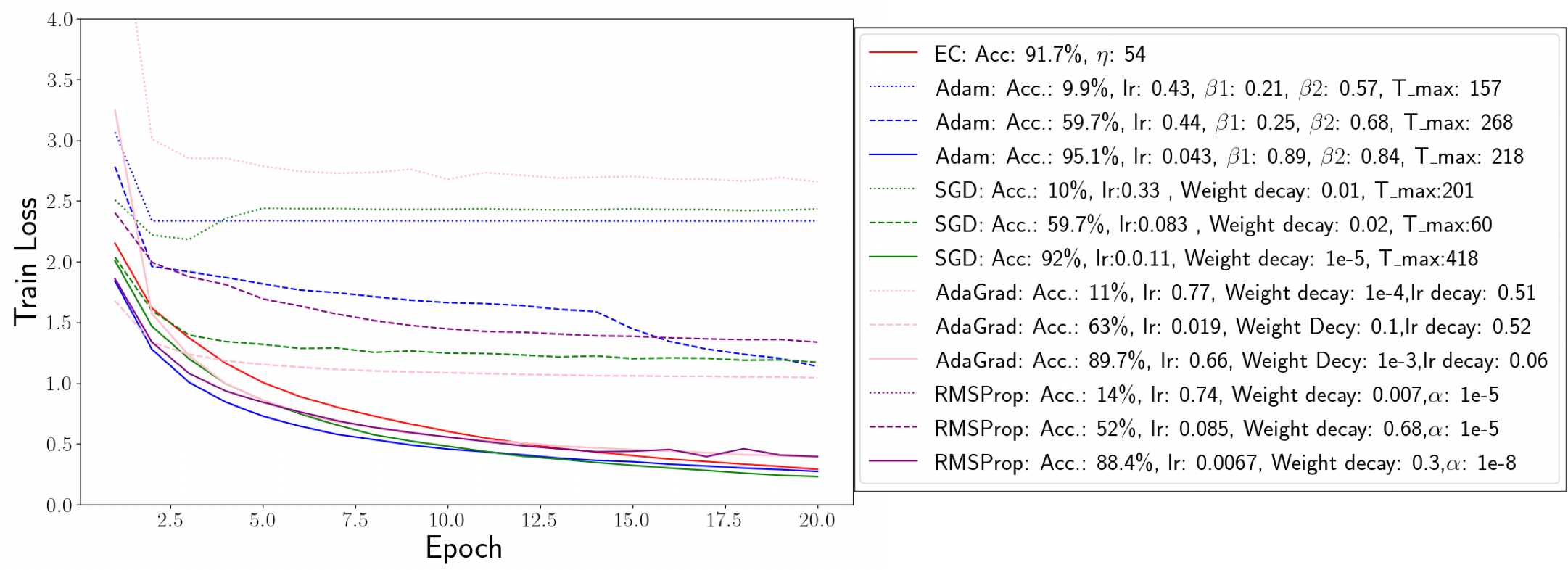}
     \caption{Training Loss of Resnet18 over 20 epochs using multiple settings for comparison optimizers}
     \label{fig:training_loss_resnet_18}
 \end{figure}

\subsection{Normalized Per-Epoch Wallclock Times}

The normalized per-epoch run-time for the experiments are provided in Table \ref{tab:wallclock_times}. ECCO-DNN's insensitivity to hyperparameter values comes at the cost of a slower per-epoch run time. 

\begin{table}[H]
  \caption{Normalized Per-Epoch Wallclock Time for Experiments}
  \centering
  \begin{tabular}{llllll}
    \toprule
      & ECCO-DNN & Adam & SGD & RMSProp & AdaGrad \\
    \midrule
     4-Layer DNN (MNIST) & 1 & 0.85 & 0.84 & 0.85 & 0.83 \\ \midrule
     Resnet18 (CIFAR-10) & 1 & 0.84 & 0.83 & 0.84 & 0.83 \\ \midrule
     Lenet5 (CIFAR-10) & 1 & 0.88 & 0.86 & 0.87 & 0.87 \\ \midrule
     SeNet-18 (CIFAR-10) & 1 & 1.01 & 0.97 & 0.96 & 0.99 \\ \midrule
     LSTM (Power Consumption) & 1 & 0.92 & 0.87 & 0.93 & 0.95 \\
    \bottomrule
  \end{tabular}%
\label{tab:wallclock_times}
\end{table}

\subsection{Training Resnet-50 and Resnet-101 models on CIFAR-100 dataset}
\label{sec:trainig_cifar_100}
We demonstrate ECCO-DNN's scalability to larger datasets and models by training the Resnet-50 and Resnet-101 models on the CIFAR-100 dataset for 100 and 80 epochs, respectively, using ECCO-DNN.  Results in Table \ref{tab:cifar-100-ecco} show a mean test accuracy of 68.1\% and 68.4\% for Resnet-50 and Resnet-101 across 20 random $\eta$ values, with small performance variance (1.88\% and 1.95\%). ECCO-DNN's mean accuracy is within 2\% of optimally tuned optimizers as reported in \cite{shahadat2023enhancing} (Resnet-50) and \cite{dun2022resist} (Resnet-101). Note, our models require no transfer learning or hyperparameter tuning modifications for ECCO-DNN training.

\begin{table}[h]
\centering
  \caption{Mean and Std. classification accuracy of training Resnet-50 and Resnet-101 on CIFAR-100 using ECCO-DNN over 100 and 80 epochs, respectively}

  \begin{tabular}{lll}
    \toprule
     Model & Mean & Std \\
    \midrule
    Resnet-50 & 68.1 & 1.88 \\ \midrule
    Resnet-100 & 68.7 & 1.43 \\ 
    \bottomrule
  \end{tabular}%
\label{tab:cifar-100-ecco}
\end{table}


\end{document}